\def\@ACM@copyright@check@cc{}
\newtheorem{assumption}{Assumption}
\newtheorem{lemma}{Lemma}
\newtheorem{example}{Example}
\newtheorem{theorem}{Theorem}
\begin{document}

\title[SEVIN]{Scalable and Interpretable Verification of Image-based Neural Network Controllers for Autonomous Vehicles}

%
\author{Aditya Parameshwaran}
\email{aparame@clemson.edu}
\orcid{0009-0004-9688-4580}
\affiliation{%
  \institution{Clemson University}
  \city{Clemson}
  \state{South Carolina}
  \country{USA}
}

\author{Yue Wang}
\email{yue6@clemson.edu}
\orcid{0000-0003-0146-7262}
\affiliation{%
  \institution{Clemson University}
  \city{Clemson}
  \state{South Carolina}
  \country{USA}
}

\renewcommand{\shortauthors}{Parameshwaran et al.}

\begin{abstract}
Existing formal verification methods for image-based neural network controllers in autonomous vehicles often struggle with high-dimensional inputs, computational inefficiency, and a lack of interpretability. These challenges make it difficult to ensure safety and reliability, as processing high-dimensional image data is computationally intensive and neural networks are typically treated as black boxes. To address these issues, we propose \textbf{SEVIN} (Scalable and Interpretable Verification of Image-Based Neural Network Controllers), a framework that leverages Variational Autoencoders (VAE) to encode high-dimensional images into a lower-dimensional, interpretable latent space. By annotating latent variables with corresponding control actions, we generate convex polytopes that serve as structured input spaces for verification, significantly reducing computational complexity and enhancing scalability. Integrating the VAE's decoder with the neural network controller allows for formal and robustness verification using these interpretable polytopes. Our approach also incorporates robustness verification under real-world perturbations by augmenting the dataset and retraining the VAE to capture environmental variations. Experimental results demonstrate that SEVIN achieves efficient and scalable verification while providing interpretable insights into controller behavior, bridging the gap between formal verification techniques and practical applications in safety-critical systems.
\end{abstract}

\begin{CCSXML}
<ccs2012>
   <concept>
       <concept_id>10003752.10003790.10002990</concept_id>
       <concept_desc>Theory of computation~Logic and verification</concept_desc>
       <concept_significance>500</concept_significance>
       </concept>
   <concept>
       <concept_id>10003752.10003790.10011192</concept_id>
       <concept_desc>Theory of computation~Verification by model checking</concept_desc>
       <concept_significance>500</concept_significance>
       </concept>
   <concept>
       <concept_id>10010147.10010178.10010213.10010204</concept_id>
       <concept_desc>Computing methodologies~Robotic planning</concept_desc>
       <concept_significance>300</concept_significance>
       </concept>
   <concept>
       <concept_id>10010147.10010178.10010224.10010225.10010233</concept_id>
       <concept_desc>Computing methodologies~Vision for robotics</concept_desc>
       <concept_significance>300</concept_significance>
       </concept>
 </ccs2012>
\end{CCSXML}

\ccsdesc[500]{Theory of computation~Logic and verification}
\ccsdesc[500]{Theory of computation~Verification by model checking}
\ccsdesc[300]{Computing methodologies}
\ccsdesc[300]{Computing methodologies~Vision for robotics}

\keywords{Formal verification, latent space representation, symbolic specifications, neural network controller}

\received{31 October 2024}

\maketitle

\section{Introduction}

Ensuring the safety and reliability of image-based neural network controllers in autonomous vehicles (AVs) is paramount. These controllers process high-dimensional inputs, such as images from front cameras, to make real-time control decisions. However, existing formal verification methods~\cite{Katz2017Reluplex:Networks, Gehr2018AI2:Networks, Singh2019AnAbstraction} face significant challenges due to the high dimensionality and complexity of image inputs, leading to computational inefficiency and scalability issues~\cite{bunel2018unified, singh2018fast}. Moreover, these methods often treat neural networks as black boxes, offering limited interpretability and making it difficult to understand how specific inputs influence outputs—an essential aspect for safety-critical applications like AVs.

Recent efforts have employed abstraction-based methods~\cite{Gehr2018AI2:Networks, Singh2019AnAbstraction} and reachability analysis~\cite{ruan2018reachability} to approximate neural network behaviors. Specification languages grounded in temporal logic~\cite{pnueli1977temporal, vasilache2022verifying} and Satisfiability Modulo Theory (SMT) solvers~\cite{ehlers2017formal} have been used to formalize and verify properties. Several innovative approaches have emerged to specifically address the verification challenges of image-based neural networks in autonomous systems. Tran et al. introduced ImageStars, a set representation technology that efficiently handles the high dimensionality of image inputs while providing formal guarantees~\cite{Tran2020Verification}. In parallel, Katz et al. leveraged generative models such as General Adversarial Networks (GAN) to create a lower-dimensional latent space for closed loop verification, thus reducing computational complexity~\cite{Katz2021Verification}. While their goal is to conduct closed loop verification along with a linearized plant model, our aim is to evaluate the performance of different neural network controllers with input spaces generated by both clean and augmented image datasets. For autonomous systems specifically,  Julian et al. developed an adaptive stress testing framework that identifies critical scenarios where neural networks controllers might fail~\cite{Julian2020Validation}, while Al-Nuaimi et al. proposed hybrid verification techniques that combine formal methods with simulation-based testing to achieve more comprehensive safety guarantees~\cite{Al-Nuaimi2021Hybrid} . Despite these advances, creating scalable and interpretable verification methods for image-based neural network controllers remains an open challenge. Furthermore, robustness verification under real-world input perturbations also remains unsolved. Modeling and analyzing variations efficiently is difficult due to the complexity of image data and environmental factors affecting AVs. Consequently, current approaches lack methods that:
\begin{itemize} 
    \item \textbf{Reduce Computational Complexity:} Effectively handle the high dimensionality of image inputs without compromising verification thoroughness. 
    \item \textbf{Enhance Interpretability:} Provide insights into how input features influence control actions, facilitating better understanding and trust. 
    \item \textbf{Improve Scalability:} Scale to larger datasets and more complex controllers, especially when considering robustness against real-world perturbations. 
\end{itemize}

To address these limitations, we propose SEVIN (\emph{Scalable and Interpretable Verification of Image-Based Neural Network Controllers}), a novel approach that leverages unsupervised learning with a Variational Autoencoder (VAE)~\cite{kingma2013autoencoding} to learn a structured latent representation of the controller's input space. By encoding high-dimensional image data into a lower-dimensional, interpretable latent space, we significantly reduce the computational complexity of the verification process, making it more scalable. We define \textbf{interpretable} formal verification as the process of not only mathematically proving that a neural network satisfies certain properties but also providing a human-understandable correlation between symbolic properties, input space and output space.

Our method involves training a VAE on a dataset of image-action pairs collected from a driving simulator. The latent space is partitioned into convex polytopes corresponding to different control actions, enabling us to define formal specifications over these polytopes. By operating in this latent space, we enhance interpretability and gain insights into how latent features influence control actions.

We further extend our approach to incorporate robustness verification under input perturbations common in real-world scenarios for AVs. By augmenting the dataset with perturbed images and retraining the VAE, we ensure that the latent space captures variations due to environmental changes, sensor noise, and other factors affecting image inputs.

Our experimental results demonstrate that SEVIN not only achieves efficient and scalable verification of image-based neural network controllers but also provides interpretable insights into the controller's behavior. This advancement bridges the gap between formal verification techniques and practical applications in safety-critical systems like AVs. In summary, we make the following contributions

\subsection{Summary of Contributions}
\begin{itemize} 
    \item[(1)] An interpretable latent space is developed for a neural network controller dataset by employing a Gaussian Mixture-VAE model. The encoded variables are annotated according to the control actions correlated with their high-dimensional inputs, enabling the derivation of convex polytopes as defined input spaces for the verification process.
    
    \item[(2)] A streamlined and scalable framework is then constructed to integrate the VAE’s decoder network with the neural network controller, facilitating formal and robustness verification of the controller by utilizing the interpretable convex polytopes as structured input spaces.
    
    \item[(3)] Finally, symbolic specifications are synthesized to encapsulate the safety and performance properties of two image-based neural network controllers. Using these specifications in conjunction with the \texttt{$\alpha-\beta-CROWN$} neural network verification tool \cite{zhang2018efficient, xu2020automatic, xu2021fast, wang2021beta}, formal and robustness verification of the controllers is effectively performed.

\end{itemize}

\section{Preliminaries}
\subsection{Variational Autoencoder (VAE)}\label{sec:VAE}

VAEs are generative models that compress input data (\( \mathbf{x} \)) into a latent space and then reconstruct the input (\( \mathbf{\hat{x}} \)) from the compressed latent representation \cite{kingma2013autoencoding,pmlr-v32-rezende14}. A VAE $V(\mathbf{x})$, consists of an encoder \( E(\mathbf{x}) \) and a decoder \( D(\mathbf{z}) \), where \( \mathbf{z} \) is the latent variable capturing the compressed representation of the input data.

In variational inference, the true posterior distribution \( p(\mathbf{z}|\mathbf{x}) \) is often intractable to compute directly and hence an approximate posterior \( q(\mathbf{z}|\mathbf{x}) \) is introduced \cite{kingma2013autoencoding}. The encoder maps the input data to a latent distribution \( q_{\phi}(\mathbf{z}|\mathbf{x}) \), parameterized by \( \phi \), while the decoder reconstructs the input data from the latent variable using \( p_{\theta}(\mathbf{x}|\mathbf{z}) \), parameterized by \( \theta \). Instead of directly calculating for the intractable marginal likelihood \( p(\mathbf{x}) \), VAEs maximize the Evidence Lower Bound (ELBO) to provide a tractable lower bound to \( \log p(\mathbf{x}) \) \cite{kingma2013autoencoding}:
\begin{equation}\label{eq:VAE_loss}
\text{ELBO} = \underbrace{\mathbb{E}_{\mathbf{z} \sim q_{\phi}(\mathbf{z}|\mathbf{x})} [\log p_{\theta}(\mathbf{x} \ | \ \mathbf{z})]}_{\text{Reconstruction Term}} - \underbrace{\mathcal{D}_{KL}[q_{\phi}(\mathbf{z} \ |\ \mathbf{x}) \, \| \, p(\mathbf{z})]}_{\text{KL Divergence Term}}
\end{equation}
The ELBO consists of a reconstruction term that encourages the decoded output to be similar to the input data, and a Kullback-Leibler (KL) divergence term that regularizes the latent space to match a prior distribution \( p(\mathbf{z}) \). The prior \( p(\mathbf{z}) \) is often chosen as a standard Gaussian \cite{kingma2013autoencoding}, but can be more flexible, such as a Gaussian mixture model \cite{dilokthanakul2016deep} or VampPrior \cite{tomczak2018vae}, depending on the desired latent space structure.

\subsection{Neural Network Verification}\label{prelim:nnv}
Neural network verification tools are designed to rigorously analyze and prove properties of neural networks, ensuring that they meet specified input-output requirements under varying conditions \cite{liu2019algorithms}. Consider an \( L \)-layer neural network representing the function $F(\mathbf{x})$ for which the verification tools can determine the validity of the property as:
\begin{equation}\label{eq:nnv_1}
\mathbf{x} \in X  \implies F(\mathbf{x}) \in A  
\end{equation}

where \( X \) and \(A \) are the convex input and output sets, respectively \ \cite{KatzVerificationModels}. The neural network verification ensures that for all inputs in a specified set \( X \), the outputs of the neural network \( F(\mathbf{x}) \) satisfy certain properties defined by a set \( A \). The weights and biases for $F(\mathbf{x})$ are represented as \( \mathbf{W}^{(i)} \in \mathbb{R}^{d_n^{(i)} \times d_n^{(i-1)}} \) and \( \mathbf{b}^{(i)} \in \mathbb{R}^{d_n^{(i)}} \), where $d_n^{(i)}$ is the dimensionality for layer \( i \in \{1, \ldots, L\} \) for the $L$-layered neural network. The neural network function \( F(\mathbf{x}) = h^{(L)}(\mathbf{x}) \):
\begin{equation}\label{eq:nn_def}
    \begin{aligned}
        h^{(i)}(\mathbf{x}) &= \mathbf{W}^{(i)} \hat{h}^{(i-1)}(\mathbf{x}) + \mathbf{b}^{(i)}, \\
        \hat{h}^{(i)}(\mathbf{x}) &= \sigma\left( h^{(i)}(\mathbf{x}) \right), \\
        \hat{h}^{(0)}(\mathbf{x}) &= \mathbf{x}
    \end{aligned}
\end{equation}

where \( \sigma \) denotes the activation function. When the ReLU activation function is used, the neural network verification problem \eqref{eq:nnv_1} becomes a constrained optimization problem with the objective function as shown below~\cite{tjeng2019evaluating}, 
\begin{equation}\label{eq:nnv_optimization}
    \begin{aligned}
        &F_{\min} = \min_{\mathbf{x} \in X} F(\mathbf{x}), \quad  F_{\max} = \max_{\mathbf{x} \in X} F(\mathbf{x}) && \\
        \text{s.t.} \quad & F_{\min}, 
        F_{\max}\in A &&
    \end{aligned}
\end{equation}
where \( F(\mathbf{x}) \) is defined as the set of piecewise-linear functions from Equation~\eqref{eq:nn_def}. Since the ReLU activation functions are piecewise linear, allowing the neural network to be represented as a combination of linear functions over different regions of the input space, the verification problem can be formulated as an optimization problem solvable by techniques such as Mixed-Integer Linear Programming (MILP) \cite{tjeng2019evaluating} and SMT solvers~\cite{kaiser2021smt}.

Furthermore, \textit{robust formal} verification is a specific aspect of neural network verification that focuses on the network's resilience to small perturbations in the input data~\cite{huang2017safety}.
The robustness verification problem can be formalized as:
\begin{displaymath}\label{eq:robustness_problem}
 \mathbf{x} \in B(\mathbf{x}_0, \delta) \implies F(\mathbf{x}) \in A
\end{displaymath}

where \( B(\mathbf{x}_0, \delta) = \{\mathbf{x} | \|\mathbf{x} - \mathbf{x}_0\| \leq \delta\} \) represents a norm-bounded perturbation around a nominal input \( \mathbf{x}_0 \), and \( \delta >0 \) is the perturbation limit.
Alternatively, robustness verification can also be formulated as a constrained optimization problem:
\begin{equation}\label{eq:robustness_optimization}
    \begin{aligned}
        & F_{\min} = \min_{\mathbf{x} \in B(\mathbf{x}_0, \delta)} F(\mathbf{x}), \quad F_{\max}= \max_{\mathbf{x} \in B(\mathbf{x}_0, \delta)} F(\mathbf{x})\\
        \text{s.t.} \quad & F_{\min}, 
        F_{\max}\in A
    \end{aligned}
\end{equation}

By solving the optimization problems in \eqref{eq:nnv_optimization} and \eqref{eq:robustness_optimization} within their defined input sets, and verifying that the corresponding outputs reside within the target set \( A \), verification tools such as Reluplex \cite{Katz2017Reluplex:Networks} and AI$^2$ \cite{Gehr2018AI2:Networks} provide essential guarantees for vanilla formal and robustness verification. 

\begin{figure*}[!h] 
    \centering
    \includegraphics[width=1.0\textwidth]{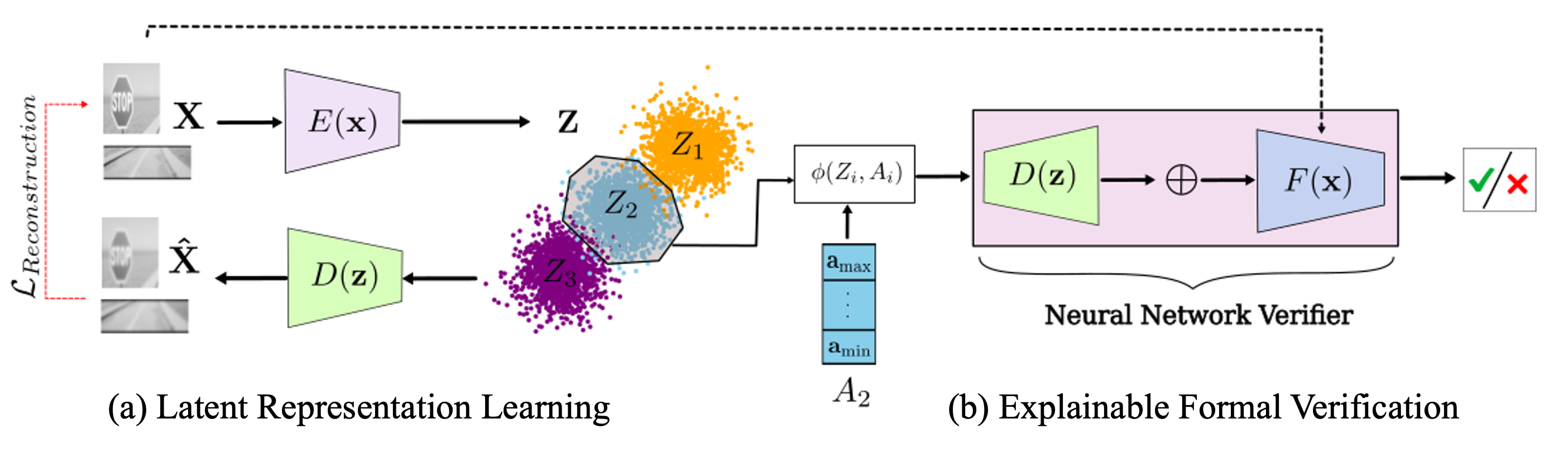}
    \caption{The SEVIN model can be decomposed into two sub-modules for conducting formal verification of any neural network controller. (a) A VAE, defined as $\hat{\mathbf{x}} = D(E(\mathbf{x}))$, is initially trained and utilized to learn representation sets ($Z_i$) of latent features from the dataset $X$. The same dataset $X$ is also used to train the image-based neural network controller $F(\mathbf{x})$, which will later undergo verification. (b) Any latent feature sample $\mathbf{z} \in Z_i$ is representative of the dominant features that influence the control action ($\mathbf{a}$) predicted by $F(\mathbf{x})$. By combining the decoder $D(\mathbf{z})$ with the neural network controller $F(\mathbf{x})$, we can determine a set of control actions $A_i$ based on 
     $Z_i$ (see more in Lemma~\ref{lemma_1}). Finally, using a neural network verification tool, we can formally verify the satisfaction of the neural network controller $F(\mathbf{x})$ against a formal specification ($\phi$)}.
    \label{fig:approach}
\end{figure*}

\subsection{Symbolic Specification Language}\label{sec:nsl}
The symbolic specification language defines properties for neural network verification, integrating principles from \emph{Linear Temporal Logic} (LTL)~\cite{pnueli1977temporal} to express dynamic, time-dependent behaviors essential for cyber-physical systems like AV.

LTL formulas are defined recursively as:
\begin{displaymath}\label{eq:ltl_syntax}
\Phi ::= \text{true} | a | \varphi_1 \lor \varphi_2 | \neg \varphi | \bigcirc \varphi | \varphi_1 \, \mathcal{U} \, \varphi_2
\end{displaymath}

where
\begin{itemize}
    \item \( \text{true} \) denotes the Boolean constant \texttt{True}.
    \item \( a \) is an atomic proposition, typically about network inputs or outputs.
    \item \( \lor \), \( \neg \), \( \bigcirc \), and \( \mathcal{U} \) represent disjunction, negation, next, and until operators.
\end{itemize}

Using the above LTL formulas, other operators like ``always" (\( \square \varphi \)) and ``eventually" (\( \lozenge \varphi \)) can be defined $\square \varphi \equiv \neg \lozenge \neg \varphi, \lozenge \varphi \equiv \text{true} \, \mathcal{U} \, \varphi.$

These temporal operators allow precise specification of properties over time, such as safety (\( \square \varphi \)) and liveness (\( \lozenge \varphi \)) conditions. Specification methods based on LTL~\cite{pnueli1977temporal,vasilache2022verifying}, Signal Temporal Logic (STL)~\cite{akazaki2018falsification}, Satisfiability Modulo Theories (SMT)~\cite{ehlers2017formal}, and other formal techniques provide the basis for rigorous neural network verification, enabling precise and reliable analysis of temporal behaviors in dynamic environments.

\begin{example}
    Consider an image based neural network controller $F(\mathbf{x})$ that is trained to predict steering action values ($\mathbf{a}$). We expect that for all images in the subset of left-turn images $X_{\text{left}}$, the controller should predict negative action values corresponding to turning left, i.e.,
    \begin{displaymath}
    \mathbf{x} \in X_{\text{left}} \implies F(\mathbf{x}) \in A_{\text{left}}
    \end{displaymath}

    Formal verification of the neural network $F(\mathbf{x})$ thus corresponds to solving the following optimization problem:
    \begin{displaymath}\label{eq:example_1_op}
        \begin{aligned}
            &F_{\min} = \min_{\mathbf{x} \in X_{\text{left}}} F(\mathbf{x}), F_{\max} = \max_{\mathbf{x} \in X_{\text{left}}} F(\mathbf{x}) \\
            \text{s.t.} \quad & F_{\min} , F_{\max}\in A_{\text{left}}
        \end{aligned}
    \end{displaymath}
    
    We can thus formally define the input specification to the neural network verification tool using the symbolic specification language operators as:
    \begin{displaymath}\label{eq:example_1_prop}
        \varphi := \square (\ \{F_{\min},F_{\max}\} \in A_{\text{left}})
    \end{displaymath}
    
    If the verification tool can show that the specification $\varphi$ stands \text{true} for the given input set $X_{\text{left}}$, then the specification is satisfied (\textbf{SAT}), or else the specification is unsatisfied (\textbf{UNSAT}).
\end{example}

\section{Our Solution}\label{problem_form}

We begin by collecting images and control actions from a driving simulator, forming a dataset of image-action pairs $(\mathbf{x}, \mathbf{a})$, where $X = \{\mathbf{x}_j\}_{j=1}^N$ consists of $N$ front camera images and $A = \{\mathbf{a}_j\}_{j=1}^N$ consists of the corresponding control actions. A VAE $V(\mathbf{x})$ is trained to learn the latent representation $Z$ of this dataset, encoding high-dimensional image data into a lower-dimensional latent space (see Section~\ref{sec:latent_space}). This encoding reduces the computational complexity of the verification problem, making it more scalable.

Once the VAE is trained, we label the latent variables ($\mathbf{z}$) based on their corresponding control actions ($\mathbf{a}$). This labeling allows us to partition the latent space $Z$ into convex polytopes $C_i$, such that $C = \bigcup_{i \in I} C_i$, as elaborated in Section~\ref{sec:conv_poly}. Each polytope $C_i$ corresponds to a specific control action set $A_i$, with the action space expressed as $A = \bigcup_{i \in I} A_i$. This partitioning generates an interpretable input space for the formal verification process, enhancing the understanding of how inputs influence outputs.

We then split the trained VAE into encoder $E(\mathbf{x})$ and decoder $D(\mathbf{z})$ networks and concatenate the decoder with the controller network $F(\mathbf{x})$. The combined network $\mathcal{H}(\mathbf{z}) = F(D(\mathbf{z}))$ maps variables directly from the latent space to control actions. By operating in the latent space instead of the high-dimensional image space, we significantly reduce the input dimensionality and computational complexity of the verification problem, making the process more scalable and computationally efficient.

Formal specifications $\varphi$ are defined using the symbolic specification language described in Section~\ref{sec:nsl}, capturing the safety and performance properties of the neural network controller $F(\mathbf{x})$. The combined network $\mathcal{H}(\mathbf{z})$ and the specifications $\varphi$ are provided to a neural network verification tool, such as $\alpha$-$\beta$-CROWN, which uses bound propagation and linear relaxation techniques to certify properties of neural networks. The verification tool checks whether $\mathcal{H}(\mathbf{z})$ satisfies the specifications $\varphi$ over the input convex polytopes in the latent space. The equivalence between verifying $F(\mathbf{x})$ and $\mathcal{H}(\mathbf{z})$ is established in Theorem~\ref{theorem_1}. 

To address robustness verification under input perturbations common in real-world scenarios, we extend our approach by training the VAE on a dataset of both clean and augmented images (see Section~\ref{sec:aug_latent_space}). The augmented dataset $\bar{X}$ is generated by applying quantifiable perturbations—such as changes in brightness, rotations, translations, and motion blurring—to the original images (see Figure~\ref{fig:img_augmentations}). The corresponding latent representation set $\bar{Z}$ is used by SEVIN to generate augmented latent space convex polytopes $\bar{C}_i$ for robustness analysis. The overall formal verification process aims to assess the neural network controller's performance under two distinct conditions:

\textit{Vanilla formal verification}: For a clean input space $C_i$, drawn from the subset $X_i \subseteq X$, assumed to consist of unperturbed, front-camera-captured images.  

\textit{Robust formal verification}: For an augmented input space $\bar{C}_i$ from the subset $\bar{X_i} \subseteq \bar{X}$, where $\bar{X}$ comprises images with applied, quantifiable augmentations relevant to AV scenarios.

This extension makes the verification process more scalable by incorporating robustness verification into the same framework without significant additional computational complexity.

\section{Scalable and interpretable Verification of Image-based Neural Networks (SEVIN)}

\subsection{Latent Representation Learning of the Dataset}\label{sec:latent_space}

To develop a scalable and interpretable framework for neural network controller verification, we first train a VAE on the dataset of images used in the neural network controller training process. The VAE is tasked with reconstructing front-camera images captured by an AV while learning latent representations that capture the underlying structure and variability in the data—such as different driving conditions, environments, and vehicle behaviors—in a compressed and informative form.

 Let \( V(\mathbf{x}): \mathbb{R}^{h \times w} \rightarrow \mathbb{R}^{h \times w} \) represent a Gaussian Mixture Variational Autoencoder (GM-VAE) trained over a dataset of images \( X = \{\mathbf{x}_i\}_{i=1}^M \subset \mathbb{R}^{h \times w} \) to learn a structured latent space representation \( \mathbf{z} \in Z \subset \mathbb{R}^{d_z} \) and reconstruct images \( \hat{\mathbf{x}} = V(\mathbf{x}) \in \hat{X} \subset \mathbb{R}^{h \times w} \). Here, \( h \times w \) denotes the height and width of the images. We assume a Gaussian mixture prior over the latent variables \( \mathbf{z} \), defined as:
\begin{equation}\label{eq:GM-prior}
    p(\mathbf{z}) = \sum_{k=1}^K s_k \, \mathcal{N}(\mathbf{z} | \boldsymbol{\mu}_k, \boldsymbol{\Sigma}_k)
\end{equation}

where \( \boldsymbol{\mu}_k \) and \( \boldsymbol{\Sigma}_k \) represent the mean and covariance matrix of the \( k \)-th Gaussian component in the latent space, and \( s_k \) represents the mixture weight for each Gaussian, satisfying \( \sum_{k=1}^K s_k = 1 \). The VAE \( V(\mathbf{x}) \) comprises an encoder model \( E(\mathbf{x}): \mathbb{R}^{h \times w} \rightarrow \mathbb{R}^{3 \times K \times d_z} \) and a decoder model \( D(\mathbf{z}): \mathbb{R}^{d_z} \rightarrow \mathbb{R}^{h \times w} \). The encoder \( E(\mathbf{x}) \) outputs a \( \{3 \times K \times d_z\} \) dimensional array, where \( [\boldsymbol{\mu}_k, \log \boldsymbol{\Sigma}_k, s_k] \) corresponds to the \( k \)-th Gaussian in every latent dimension. The variable \( d_z \) denotes the dimensionality of the latent space.
An advantage of using a GM-VAE is that it provides a more flexible latent space representation compared to a standard VAE, especially when the data exhibit multiple modes~\cite{dilokthanakul2016deep}.

To train the GM-VAE loss function ($\mathcal{L}$), we employ the loss function defined in \cite{dilokthanakul2016deep} as:
\begin{displaymath}\label{eq:gm_vae_loss}
    \mathcal{L} = 
     \ \operatorname{MSE}\left(\mathbf{x}, \hat{\mathbf{x}}\right) \nonumber
     + \beta \sum_{k=1}^{K} \, 
    \mathcal{D}_{KL} \left[ q(\mathbf{z} | \mathbf{x}, k) 
    \,\|\, p(\mathbf{z}, k) \right]
\end{displaymath}

The Mean Squared Error (MSE) between the input data and the reconstructed images is used to calculate the reconstruction loss described in \eqref{eq:VAE_loss}. Maximizing the likelihood $p(\mathbf{x}|\mathbf{z})$ under the Gaussian mixture is equivalent to minimizing the MSE between \( \mathbf{x} \) and \( \hat{\mathbf{x}} \), as the negative log-likelihood of a Gaussian distribution with fixed variance simplifies to MSE loss \cite{kingma2013autoencoding}. Here, \( q_{\phi}(\mathbf{z}|\mathbf{x}, k) \) is the posterior probability of selecting the \( k \)-th component of the mixture for input \( \mathbf{x} \), and the KL divergence term \( \mathcal{D}_{KL} \) measures the discrepancy between the posterior and the corresponding prior Gaussian component $p (\mathbf{z}, k)$. The hyper-parameter \( \beta \in \mathbb{R}^{+} \) balances the trade-off between reconstructing the input data accurately and minimizing the divergence between the approximate posterior and the prior distribution, similar to the concept introduced in the $\beta$-VAE framework~\cite{higgins2017betaVAE}. This formulation allows the VAE to learn more complex latent structures by capturing multi-modal distributions in the latent space \cite{dilokthanakul2016deep, kingma2013autoencoding}.

\begin{assumption}\label{assumption_1}
Given a GM-VAE  \( V(\mathbf{x}) \) trained on a dataset of images \( X \), we assume that the reconstructed images \( \hat{\mathbf{x}} = V(\mathbf{x}) \) satisfy \( \| F(\hat{\mathbf{x}}) - F(\mathbf{x}) \|_2 \leq \epsilon \), where \( \epsilon > 0 \).
\end{assumption}

For a given neural network controller \( F(\mathbf{x}) \), the parameter \( \epsilon \) is a measurable quantity that depends solely on the training efficacy of \( V(\mathbf{x}) \). Our objective is to optimize \( V(\mathbf{x}) \) to achieve \( \epsilon \approx \frac{1}{100} \| F(\mathbf{x}) \|_2 \), which is considered an acceptable error threshold in our AV driving scenarios.

\subsection{Interpretable Latent Space Encoding}\label{interpret_latent}
The encoder \( E(\mathbf{x}) \) learns a mapping from high-dimensional images \( \mathbf{x} \) to low-dimensional latent representations \( \mathbf{z} \). Images with similar features—such as lane markings, traffic signs, or attributes like brightness and blur—are mapped close together in the latent space because the encoder learns to associate these common attributes with nearby regions. The continuity of the latent space enforced by the KL divergence ensures that similar inputs have similar latent representations. The decoder \( D(\mathbf{z}) \) regenerates the front-camera images \( \hat{\mathbf{x}} \) from the latent variables \( \mathbf{z} \in Z \).

In our approach, each image \( \mathbf{x} \) is associated with a corresponding control action \( \mathbf{a} \), such as steering angle or linear velocity, which acts as a label for the image for training purposes. The control action \( \mathbf{a} \) is the action to be predicted by the neural network controller \( F(\mathbf{x}) \) based on the image \( \mathbf{x} \). The dataset \( X \) collected from driving simulations is randomly split 70/30 for training and validating the VAE \( V(\mathbf{x}) \) and the neural network controller \( F(\mathbf{x}) \) respectively. The datasets are generated and labeled automatically during simulation, where the vehicle's control actions are recorded alongside the images captured. Further details on the data collection process are provided in Section~\ref{sec:driving_scenarios}.

\begin{lemma}\label{lemma_1}
Let \( X = \{\mathbf{x}_i\}_{i=1}^M \) be a dataset of images, and let \( A \subset \mathbb{R}^m \) be a set of action values, where each image \( \mathbf{x}_i \) is associated with a control action \( \mathbf{a}_i \in A \) that the neural network controller \( F(\mathbf{x}) \) should predict.

Using the encoder \( E(\mathbf{x}) \), the images are mapped to latent variables \( \mathbf{z}_i = E(\mathbf{x}_i) \), resulting in the set of latent variables \( Z = \{\mathbf{z}_i\}_{i=1}^M \). The latent variables are assumed to follow the GM prior distribution described in \eqref{eq:GM-prior}. Then, for each action value \( \mathbf{a} \in A \), the set of latent variables corresponding to \( \mathbf{a} \) has positive probability under \( p(\mathbf{z}) \). Specifically, the probability of sampling a latent variable \( \mathbf{z} \) such that there exists a pair \( (\mathbf{z}, \mathbf{a}) \) in \( Z \times A \) is greater than zero:
\begin{equation}\label{eq:lemma1_result}
    \forall \mathbf{a} \in A, \quad p\left( \exists \mathbf{z} \in Z \ \text{s.t.} \ (\mathbf{z}, \mathbf{a}) \in Z \times A \right) > 0
\end{equation}
\end{lemma}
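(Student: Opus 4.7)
The plan is to prove existence by explicitly exhibiting a latent point with positive density for each action, then argue that positive density at a point implies positive probability mass in any neighborhood of it, which is the natural reading of the (somewhat informal) probabilistic statement in equation (9). The key ingredients I will combine are (i) the dataset structure, which guarantees at least one image per action label, (ii) the encoder $E$ as a deterministic map producing a concrete latent witness, and (iii) the strict positivity of a Gaussian mixture density with positive weights and positive definite covariances.

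First I would fix an arbitrary $\mathbf{a} \in A$. By the construction of $A$ in Section~\ref{interpret_latent}, the set $A$ is exactly the collection of action labels appearing in the dataset, so there exists at least one index $i^\star$ with $\mathbf{a}_{i^\star} = \mathbf{a}$. Let $\mathbf{z}^\star = E(\mathbf{x}_{i^\star})$; then by definition $\mathbf{z}^\star \in Z$ and $(\mathbf{z}^\star, \mathbf{a}) \in Z \times A$, so the existential inside the probability in \eqref{eq:lemma1_result} is witnessed by $\mathbf{z}^\star$.

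Next I would evaluate the prior density at $\mathbf{z}^\star$. From \eqref{eq:GM-prior}, $p(\mathbf{z}^\star) = \sum_{k=1}^{K} s_k \, \mathcal{N}(\mathbf{z}^\star \mid \boldsymbol{\mu}_k, \boldsymbol{\Sigma}_k)$; since each $\boldsymbol{\Sigma}_k$ is positive definite, each Gaussian component is strictly positive on all of $\mathbb{R}^{d_z}$, and since the weights $s_k$ are nonnegative and sum to one, at least one $s_k > 0$, giving $p(\mathbf{z}^\star) > 0$. To convert this pointwise density bound into a positive-measure statement (which is how \eqref{eq:lemma1_result} must be read, because drawing a specific $\mathbf{z}$ from a continuous distribution is a measure-zero event), I would take an open ball $B(\mathbf{z}^\star, r)$ for some $r>0$, use continuity of $p$ to assert $p(\mathbf{z}) \geq \tfrac{1}{2} p(\mathbf{z}^\star)$ on this ball, and conclude $\Pr[\mathbf{z} \in B(\mathbf{z}^\star, r)] \geq \tfrac{1}{2} p(\mathbf{z}^\star) \cdot \mathrm{vol}(B(\mathbf{z}^\star, r)) > 0$. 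Together with Assumption~\ref{assumption_1} (continuity/closeness of encoded and reconstructed behaviour), any $\mathbf{z}$ in this ball is still associated with action $\mathbf{a}$ up to the controller tolerance $\epsilon$, so the pair $(\mathbf{z}, \mathbf{a}) \in Z \times A$ remains valid.

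The main obstacle is not the algebra but the interpretation of \eqref{eq:lemma1_result}: the event ``$\exists \mathbf{z} \in Z$ s.t.\ $(\mathbf{z}, \mathbf{a}) \in Z \times A$'' is phrased as if $Z$ were a random set, while in the setup $Z$ is the fixed image of $X$ under $E$. I would therefore make explicit in the proof that the intended reading is ``the set of latent codes compatible with action $\mathbf{a}$ has positive measure under the GM prior,'' and route the argument through the neighborhood $B(\mathbf{z}^\star, r)$ as above. A secondary subtlety is ensuring that the compatibility of a perturbed $\mathbf{z}$ with the action $\mathbf{a}$ is genuinely preserved; this is where Assumption~\ref{assumption_1} does the work, since it bounds the discrepancy between $F(\mathbf{x})$ and $F(\hat{\mathbf{x}}) = F(D(\mathbf{z}))$ for $\mathbf{z}$ in the encoded neighborhood of $\mathbf{x}_{i^\star}$.
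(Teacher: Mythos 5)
Your proposal follows the same overall route as the paper's proof: pick a witness image for each action $\mathbf{a}$, push it through the encoder to get a concrete latent point $\mathbf{z}^\star$, and then invoke positivity of the Gaussian-mixture prior. Where you genuinely diverge is in the final, measure-theoretic step, and your version is the more defensible one. The paper defines $Z_{\mathbf{a}}$ as a subset of the finite set $Z=\{\mathbf{z}_i\}_{i=1}^M$ and asserts $\int_{Z_{\mathbf{a}}} p(\mathbf{z})\,d\mathbf{z} > 0$ merely because $Z_{\mathbf{a}}$ is non-empty; but a non-empty finite set has Lebesgue measure zero, so that integral is zero as written. You explicitly flag this (``drawing a specific $\mathbf{z}$ from a continuous distribution is a measure-zero event''), reinterpret the claim as positivity of the mass of a \emph{neighborhood} of compatible latent codes, and derive it from strict positivity and continuity of the mixture density on a ball $B(\mathbf{z}^\star,r)$. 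That is the standard and correct way to obtain a positive-probability conclusion from a density, and it buys a statement that is actually true under the intended reading. The one soft spot in your argument is the appeal to Assumption~\ref{assumption_1} to claim that every $\mathbf{z}$ in the ball remains associated with $\mathbf{a}$: that assumption bounds $\|F(\hat{\mathbf{x}})-F(\mathbf{x})\|_2$ for reconstructions of dataset images, not the variation of $F(D(\cdot))$ over a latent neighborhood, so strictly you also need continuity of $F\circ D$ (which holds for ReLU networks) and the fact that the target action sets $A_i$ are intervals rather than single points. The paper sidesteps this entirely by never enlarging beyond the finite witness set, which is precisely why its integral step fails.
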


\begin{proof}\label{sec:proof_1}
For each action value \( \mathbf{a} \in A \), there exists at least one image \( \mathbf{x} \in X \) such that the associated control action is \( \mathbf{a} \). By applying the encoder to any such image \( \mathbf{x} \), we obtain the latent variable \( \mathbf{z} = E(\mathbf{x}) \). Thus, the pair \( (\mathbf{z}, \mathbf{a}) \) exists in \( Z \times A \).

Since the latent variables \( \mathbf{z} \) are generated from images \( \mathbf{x} \) via the encoder \( E(\mathbf{x}) \), and the latent space \( Z \) follows the GM prior \( p(\mathbf{z}) \), this prior \( p(\mathbf{z}) \) serves as the probability density function over \( Z \). Therefore, \( p(\mathbf{z}) \) assigns a positive probability to all latent variables in \( Z \) that correspond to images in \( X \) under the mapping \( E(\mathbf{x}) \).

Formally, for any \( \mathbf{a} \in A \), we define the set of latent variables corresponding to \( \mathbf{a} \) as:

\begin{equation}
    Z_{\mathbf{a}} = \left\{ \mathbf{z} \in Z | \exists \mathbf{x} \in X \ \text{s.t.} \ \mathbf{a} = F(\mathbf{x}) \text{ and } \mathbf{z} = E(\mathbf{x}) \right\}
\end{equation}

Since \( Z_{\mathbf{a}} \) is non-empty and \( p(\mathbf{z}) \) is a valid probability density function over \( Z \), it follows that:

\begin{equation}
    p\left( \mathbf{z} \in Z_{\mathbf{a}} \right) = \int_{Z_{\mathbf{a}}} p(\mathbf{z}) \, d\mathbf{z} > 0
\end{equation}

Thus, there exists a positive probability of sampling a latent variable \( \mathbf{z} \) corresponding to any action \( \mathbf{a} \in A \), establishing the result stated in Equation~\eqref{eq:lemma1_result}.
\end{proof}

The aim was to show that there is a non-zero probability of sampling a corresponding $\mathbf{z}$ such that the pair ($\mathbf{z,a}$) exists in the latent space. This allows us to sample variables in the latent space with the corresponding action value acting as their labels. Figure \ref{fig:latent_space} illustrates the 8-dimensional latent space of the image dataset $X$, where each point is labeled according to its corresponding action value set $A_i$. The latent space has been reduced to 2 dimensions using t-SNE (t-distributed Stochastic Neighbor Embedding) for visualization. t-SNE is a dimensionality reduction technique that transforms high-dimensional data into a lower-dimensional space while preserving the structure of data clusters, making it ideal for visualization~\cite{vanDerMaaten2008}. Due to the clustering nature of the VAE, the latent variables with similar features are clustered and can be interpreted by their action values.

\begin{figure}[h]
    \centering
    \includegraphics[width=\columnwidth]{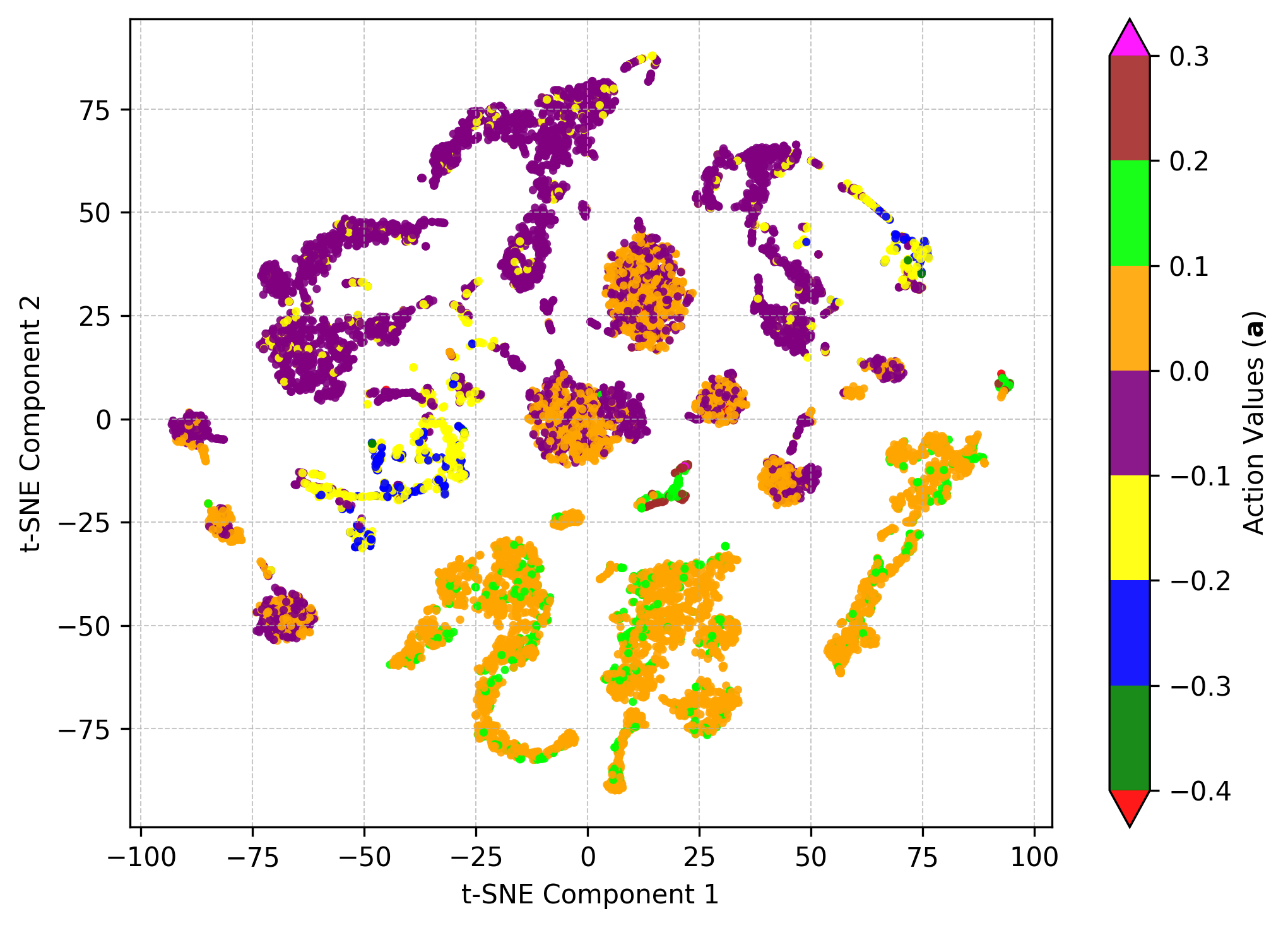}
    \caption{t-SNE plot of an 8D latent space representation generated for the clean image dataset ($X$)}
    \label{fig:latent_space}
\end{figure}

\subsection{Latent Space Convex Polytope Formulation}\label{sec:conv_poly}

For a collection of continuous action subsets \( \{ A_i \}_{i \in I} \), where \( I \) is an index set and \( A = \bigcup_{i \in I} A_i \) covers the entire action space, we construct corresponding convex polytopes \( C_i \subset \mathbb{R}^{d_z} \) in the latent space. These polytopes approximate the regions associated with each action subset \( A_i \). To generate \( C_i \) for a given action subset \( A_i \), we perform Monte Carlo sampling of the latent variables \( \mathbf{z} \) from the GM prior \( p(\mathbf{z}) \), focusing on samples corresponding to \( A_i \). Specifically, for each \( A_i \), we consider the set of images \( X_i \subseteq X \) such that each image \( \mathbf{x} \in X_i \) is associated with an action \( \mathbf{a} \in A_i \), i.e., 
\begin{displaymath}
X_i = \left\{ \mathbf{x} \in X \mid F(\mathbf{x}) \in A_i \right\}
\end{displaymath}

Using the encoder \( E(\mathbf{x}) \), we map the images \( \mathbf{x} \in X_i \) to their latent representations \( \mathbf{z}\), resulting in latent variables \( \mathbf{z} \in Z_i \), where \( Z_i \) denotes the set corresponding to \( A_i \). This establishes the correspondence between the image-action pairs \( (\mathbf{x}, \mathbf{a}) \) and the latent-action pairs \( (\mathbf{z}, \mathbf{a}) \).
To construct \( C_i \), we draw \( n \) independent samples of the latent variable \( \mathbf{z} \) from the GM prior $p(\mathbf{z})$, ensuring that each sample belongs to \( Z_i \) corresponding to \( A_i \) and is within \( 2 \) standard deviations from the mean \( \boldsymbol{\mu}_{A_i} \) of $p(\mathbf{z} | \mathbf{z} \in Z_i)$. The convex polytope \( C_i \) is then defined as the convex hull of these sampled latent variables:
\begin{equation}\label{eq:conv_polytope}
C_i = \operatorname{conv}\left( \left\{ \mathbf{z}_j | \mathbf{z}_j = E(\mathbf{x}_j), \ \mathbf{x}_j \in X_i, \ j = 1, \dots, n \right\} \right)
\end{equation}

where \( \operatorname{conv}(\cdot) \) denotes the operation of the convex hull~\cite{grunbaum2003convex}. Constructing \( C_i \) in this manner provides an under-approximation of the latent space region corresponding to \( A_i \), as it is based on finite samples that are only within 2 standard deviations from the mean ($\mu_{A_i}$). Increasing \( n \) improves the approximation and coverage. For thorough formal verification, it is often preferred to include potential variations and edge cases in the process as well. Hence, we enlarge the convex polytope \( C_i \) uniformly by applying a Minkowski sum~\cite{schneider1993convex} with a ball \( R(0, \epsilon) \) centered at the origin with radius \( \epsilon > 0 \):
\begin{equation}\label{eq:minkowski}
\tilde{C_i} = C_i \oplus R(0, \epsilon)
\end{equation}

where \( \oplus \) denotes the Minkowski sum and the ball \( R(0, \epsilon) \) is defined as:
\begin{displaymath}
R(0, \epsilon) = \left\{ \mathbf{r} \in \mathbb{R}^{d_z} | \ \| \mathbf{r} \|_2 \leq \epsilon \right\}
\end{displaymath}

The Minkowski sum expands \( C_i \) by \( \epsilon \) in all directions, resulting in an enlarged polytope \( \tilde{C_i} \)~\cite{schneider1993convex}. This enlargement accounts for slight variations or noise while maintaining the association with \( A_i \). To compute \( \tilde{C_i} \) explicitly, we can represent it as:
\begin{displaymath}\label{eq:minkowski_polytope}
\tilde{C_i} = \operatorname{conv} \left( \bigcup_{j=1}^n \left\{ \mathbf{z}_j \oplus \mathbf{r} \mid \| \mathbf{r} \|_2 \leq \epsilon \right\} \right)
\end{displaymath}

 We assume that $C_i$ and $\tilde{C_i}$ both contain latent variables corresponding to the action set $A_i$. We use the Quickhull algorithm~\cite{Barber1996Quickhull} to identify the vertices of \( C_i \) and \( \tilde{C_i} \), facilitating their construction and use in verification tasks.

\begin{lemma}\label{lemma_2}
    The convex polytope \( C_i \) defines a continuous input space, and any latent variable \( \mathbf{z} \in C_i \) decoded using the decoder \( D(\mathbf{z}) \) generates a high-dimensional reconstructed image, \( \hat{\mathbf{x}} \in \mathbb{R}^{h \times w} \), that forms a pair \( (\hat{\mathbf{x}}, \mathbf{a})^i \), where \( \mathbf{a} \in A_i \). This relationship can be expressed as:
    \begin{displaymath}\label{eq:image_action_pair}
    \forall \mathbf{z} \in C_i, \quad (\hat{\mathbf{x}}, \mathbf{a})^i = \left( D(\mathbf{z}), \mathbf{a} \right), \quad \mathbf{a} \in A_i
    \end{displaymath}

\end{lemma}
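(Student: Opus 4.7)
The plan is to decompose the claim of Lemma~\ref{lemma_2} into three pieces and prove each in turn: (i)~the geometric fact that $C_i$ is a continuous (i.e.\ connected, compact) input space, (ii)~the well-definedness of $\hat{\mathbf{x}} = D(\mathbf{z})$ for every $\mathbf{z} \in C_i$, and (iii)~the action-correspondence $\mathbf{a} \in A_i$. The first two will follow from standard properties of convex hulls and neural network continuity; the third, which is the substantive content, will be handled by combining the construction of $C_i$ from Equation~\eqref{eq:conv_polytope} with Assumption~\ref{assumption_1} and Lemma~\ref{lemma_1}.

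First, I would argue that $C_i$, defined as the convex hull of finitely many latent samples $\{\mathbf{z}_j = E(\mathbf{x}_j)\}_{j=1}^n$ drawn from $Z_i$, is a compact and path-connected convex subset of $\mathbb{R}^{d_z}$; every $\mathbf{z} \in C_i$ admits a representation $\mathbf{z} = \sum_{j=1}^n \lambda_j \mathbf{z}_j$ with $\lambda_j \geq 0$ and $\sum_j \lambda_j = 1$. The Minkowski enlargement in Equation~\eqref{eq:minkowski} preserves these properties since convexity and compactness are closed under Minkowski sums with a closed ball. This establishes the ``continuous input space'' assertion. For step (ii), I would note that $D$ is a composition of affine maps and continuous activation functions, hence continuous, so $D(C_i)$ is a well-defined, compact, path-connected subset of $\mathbb{R}^{h \times w}$, and $\hat{\mathbf{x}} = D(\mathbf{z}) \in \mathbb{R}^{h \times w}$ for every $\mathbf{z} \in C_i$.

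The main obstacle is step (iii): showing that the decoded image $\hat{\mathbf{x}}$ carries an action label in $A_i$ even for non-vertex points of $C_i$. The plan is to exploit the construction together with Lemma~\ref{lemma_1}: by the very definition of $Z_i$, the vertices satisfy $F(\mathbf{x}_j) \in A_i$, and Assumption~\ref{assumption_1} yields $\|F(D(\mathbf{z}_j)) - F(\mathbf{x}_j)\|_2 \leq \epsilon$, so the vertex images $\hat{\mathbf{x}}_j$ form pairs $(\hat{\mathbf{x}}_j, \mathbf{a}_j)^i$ with $\mathbf{a}_j \in A_i$ up to the VAE reconstruction tolerance. To extend the labeling to interior points, I would rely on the latent-space clustering property enforced by the GM prior \eqref{eq:GM-prior} and the KL regularization: since all vertices lie within two standard deviations of the cluster mean $\boldsymbol{\mu}_{A_i}$, and since $F \circ D$ is a continuous map, the image $F(D(C_i))$ is a connected subset of the action space that contains points near each $F(\mathbf{x}_j) \in A_i$. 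Assuming $A_i$ is a connected (ideally convex) region of action values, as the partition $A = \bigcup_i A_i$ suggests, I can then conclude that $F(D(\mathbf{z})) \in A_i$ for every $\mathbf{z} \in C_i$, yielding the desired pair $(\hat{\mathbf{x}}, \mathbf{a})^i$.

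The hardest part is precisely this last step, because the composition $F \circ D$ is a highly nonlinear map that need not preserve the convexity of $C_i$ or map it injectively into a single action subset. If $A_i$ is only assumed to be connected but not convex, the continuity argument still suffices, but if the Monte Carlo samples undersample the boundary of $Z_i$, the enlarged polytope $\tilde C_i$ obtained via the Minkowski sum may occasionally cross into a neighboring latent cluster associated with some $A_{i'} \neq A_i$. I would address this either by bounding the enlargement radius $\epsilon$ in \eqref{eq:minkowski} relative to the minimum inter-cluster latent distance, or by explicitly stating the well-separatedness of clusters as a consequence of the GM-VAE training objective; this keeps the proof self-contained relative to the preceding development.
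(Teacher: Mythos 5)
Your steps (i) and (ii) track the paper's proof exactly: the paper likewise writes every $\mathbf{z}\in C_i$ as a convex combination $\sum_j \lambda_j\mathbf{z}_j$ of the sampled vertices and invokes continuity of $D$ to get a well-defined $\hat{\mathbf{x}}=D(\mathbf{z})\in\mathbb{R}^{h\times w}$. The divergence, and the problem, is step (iii). The paper disposes of it in one sentence --- since each vertex $\mathbf{z}_j$ carries an action $\mathbf{a}_j\in A_i$ and $A_i$ is a continuous action set, ``the convex combination ensures'' the decoded image is associated with an action in $A_i$ --- offering no mechanism; in effect the association is assumed (the paper states just before the lemma, ``We assume that $C_i$ and $\tilde{C_i}$ both contain latent variables corresponding to the action set $A_i$''). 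You attempt to supply the missing mechanism via Assumption~\ref{assumption_1} plus a continuity/connectedness argument, which is more honest, but the closing inference does not hold: knowing that $F(D(C_i))$ is connected and contains points within $\epsilon$ of each $F(\mathbf{x}_j)\in A_i$ does not give $F(D(C_i))\subseteq A_i$. A connected set can intersect $A_i$ and still extend beyond it, and connectedness of $A_i$ does nothing to prevent this. Your proposed mitigations (bounding the Minkowski radius against inter-cluster distance, or postulating cluster separation) control whether $\tilde{C}_i$ strays into a neighboring latent cluster, but not whether $F\circ D$ maps the interior of $C_i$ outside $A_i$; those are different failure modes.

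There is also a structural reason the stronger claim you are reaching for cannot be a theorem here. If one could prove $F(D(\mathbf{z}))\in A_i$ for all $\mathbf{z}\in C_i$, the verification problem in Equation~\eqref{eq:formal_obj_func} would be satisfied by construction and the whole apparatus of Theorem~\ref{theorem_1} would be vacuous --- yet the paper's own experiments report \textbf{UNSAT} outcomes for several specifications. The lemma is better read as a labeling convention: $\hat{\mathbf{x}}=D(\mathbf{z})$ is \emph{annotated} with an action $\mathbf{a}\in A_i$ because $\mathbf{z}$ was drawn from the polytope built for $A_i$, and whether the controller's prediction actually lands in $A_i$ is precisely the property handed to the verifier. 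Your proof plan conflates the annotation with the controller's output (understandably, since the paper's own definition of $X_i$ uses $F(\mathbf{x})\in A_i$), and in doing so sets itself a goal that is both unprovable from the stated hypotheses and contradicted by Table~\ref{tab:results_robust}. Restricting step (iii) to the vertex samples --- where it follows from the construction in Equation~\eqref{eq:conv_polytope} and Assumption~\ref{assumption_1} --- and flagging the extension to interior points as an explicit assumption would match what the paper actually establishes.
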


\begin{proof}\label{sec:proof_2}
    By construction, the convex polytope \( C_i \subset \mathbb{R}^{d_z} \) is formed from \( n \) latent samples \( \{\mathbf{z}_j\}_{j=1}^n \) drawn from the GM prior \( p(\mathbf{z}) \) within two standard deviations from the mean \( \boldsymbol{\mu}_{A_i} \) corresponding to the action set \( A_i \). Specifically, each sampled latent variable \( \mathbf{z}_j \) satisfies:
    
    \begin{displaymath}
        \| \mathbf{z}_j - \boldsymbol{\mu}_{A_i} \|_2 \leq 2\sigma_{A_i}
    \end{displaymath}
    
     Since \( C_i \) is the convex hull of these samples, any \( \mathbf{z} \in C_i \) can be expressed as a linear combination of the sampled latent variables:
     
    \begin{displaymath}
        \mathbf{z} = \sum_{j=1}^n \lambda_j \mathbf{z}_j, \quad \text{where } \lambda_j \geq 0 \text{ and } \sum_{j=1}^n \lambda_j = 1
    \end{displaymath}
    
    The decoder \( D(\mathbf{z}) \) is assumed to be a continuous function mapping latent variables to high-dimensional images. Therefore, decoding \( \mathbf{z} \in C_i \) yields:
    
    \begin{displaymath}
        \hat{\mathbf{x}} = D(\mathbf{z}) = D\left( \sum_{j=1}^n \lambda_j \mathbf{z}_j \right)
    \end{displaymath}
    
    Specifically, since each latent variable \( \mathbf{z}_j \) corresponds to an action \( \mathbf{a}_j \in A_i \), and $A_i$ is a continuous action set, the convex combination ensures that \( \hat{\mathbf{x}} \) is associated with an action \( \mathbf{a} \in A_i \).
    
    Formally, for each \( \mathbf{z} \in C_i \), there exists an image \( \hat{\mathbf{x}} \in \mathbb{R}^{h \times w} \) such that:
    
    \begin{displaymath}
        (\hat{\mathbf{x}}, \mathbf{a})^i \in \mathbb{R}^{h \times w} \times A_i
    \end{displaymath}
    
    This establishes that decoding any latent variable within the convex polytope \( C_i \) produces an image associated with the action set \( A_i \).
    
    Consequently, the convex polytope \( C_i \) effectively encapsulates a continuous region in the latent space corresponding to the action set \( A_i \), ensuring that all decoded images \( \hat{\mathbf{x}} \) from \( C_i \) are correctly paired with actions \( \mathbf{a} \in A_i \).
\end{proof}

\begin{example}\label{example2}
    Consider the latent space illustrated in Figure~\ref{fig:convex_polytope}. A GM-VAE \(V(\mathbf{x})\) was trained to learn and decode the latent representation \(Z\) for a dataset that contains pairs of front-camera images and control steering actions, represented by \((\mathbf{x}, \mathbf{a})\). The correlation between \((\mathbf{x}, \mathbf{a})\) is nonlinearly mapped by the encoder \(E(\mathbf{x})\) to the latent space. The two colors illustrated in the latent space correspond to discrete sets of control actions \(\{A_1 , A_2\}\). Our objective is to construct a convex polytope \(C_1^{\epsilon}\), represented by the light-shaded region, such that it contains all latent variables \(\mathbf{z}\) labeled with action values within the range \(A_1 = [0.02, 0.2]\). After training the encoder \(E(\mathbf{x})\), we sample the latent variable set $Z_1$ such that \(Z_1 = \{\mathbf{z}_j\}_{j=1}^{1000}\) through Monte Carlo sampling from the GM prior shown in Equation \eqref{eq:GM-prior}. Once $Z_1$ is obtained, the Quickhull algorithm is applied to determine the boundary vertices for \(C_1\) using the convex polytope formulation depicted in Section \ref{sec:conv_poly}. These vertices are uniformly extended outward using the Minkowski sum, as expressed in Equation~\eqref{eq:minkowski}, with \(\epsilon = 0.05\) to handle potential edge cases. The extended polytope \(C_1^{\epsilon}\) serves as the input space for further discussions in Section~\ref{sec:formal_verification_convex}.
\end{example}

\begin{figure}[h]
    \centering
    \includegraphics[width=\columnwidth]{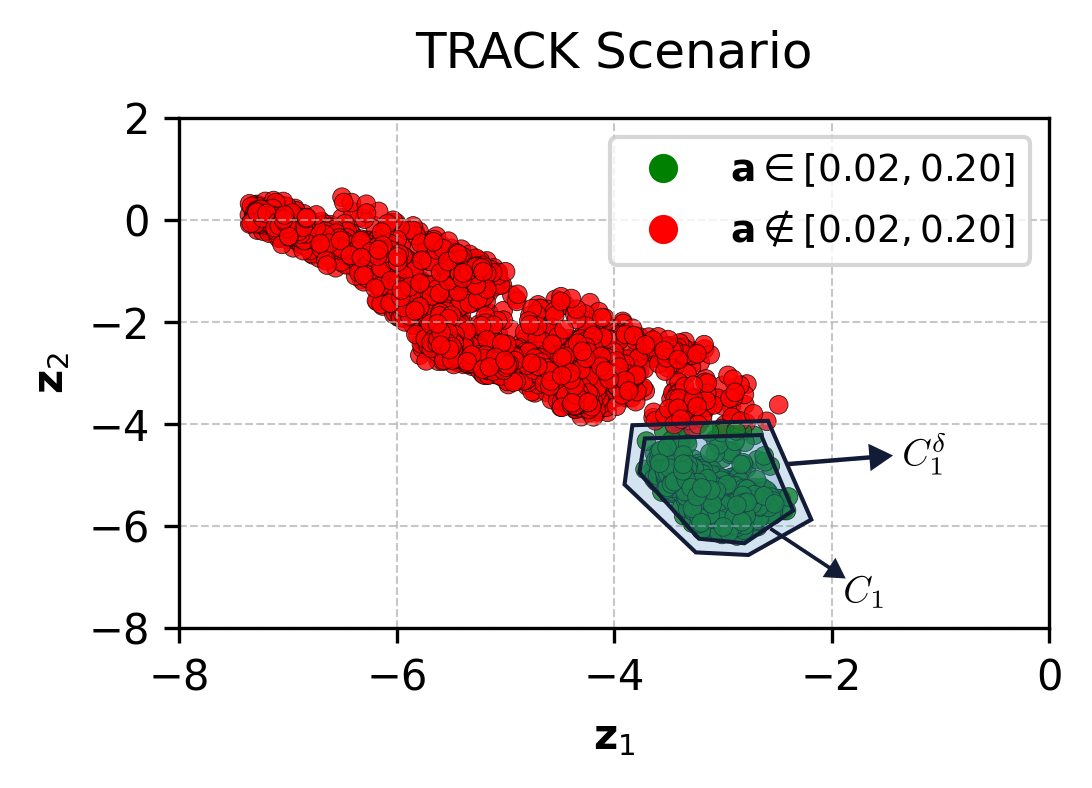}
    \caption{2D latent space representation of the front camera images from the driving scenario described in Example~\ref{example2}. The variables are labeled by action sets $A_1 = [0.02, 0.20]$ and $A_2 = A \setminus A_1$.}\Description{A 2D latent space plot showing regions labeled by action sets $A_1$ and $A_2$, derived from front camera images in a driving scenario.}
    \label{fig:convex_polytope}
\end{figure}

\subsection{Formal Verification using Latent Space Convex Polytopes}\label{sec:formal_verification_convex}
The convex polytopes \( C_i \) defined in Section~\ref{sec:conv_poly} facilitates the development of an interpretable and scalable framework for verifying image-based neural network controllers. Consider a neural network controller \( F(\mathbf{x}) \) trained on the same dataset \( X \) as the VAE from Section \ref{sec:latent_space}. Traditional formal verification processes for image-based neural networks involve solving optimization problems as expressed in Equation~\eqref{eq:nnv_optimization}. However, the input space for such problems becomes significantly large, contingent on the dimensionality of the input image \( \mathbf{x} \in \mathbb{R}^{h \times w} \). Moreover, the input space subset $X_i$ used for verification is limited to a finite and discrete selection of images, which constrains both interpretability and scalability.

In contrast, our approach leverages the convex polytope \( C_i \) to define an interpretable and continuous input space in the latent domain. Specifically, \( C_i \) encompasses a continuous region of latent variables \( \mathbf{z}_i \in C_i \), where each \( \mathbf{z}_i \) is associated with a control action \( \mathbf{a}_i \in A_i \). This transformation reduces the verification problem's input space from the high-dimensional \( \mathbb{R}^{h \times w} \) to the lower-dimensional latent space \( \mathbb{R}^{d_z} \).

To perform formal verification of the neural network controller \( F(\mathbf{x}) \), we first integrate the decoder \( D(\mathbf{z}) \) with \( F(\mathbf{x}) \), forming the combined network \( \mathcal{H} \) defined as:
\begin{displaymath}
    \mathcal{H}(\mathbf{z}) = F(D(\mathbf{z}))
\end{displaymath}

For any latent variable \( \mathbf{z} \), the combined network \( \mathcal{H}(\mathbf{z}) \) predicts a control action \( \mathbf{a} \in A\). Given that both \( F(\mathbf{x}) \) and \( D(\mathbf{z}) \) are neural networks with ReLU activation functions, the composite function \( \mathcal{H}(\mathbf{z}) \) is piecewise linear and continuous.

\begin{theorem}\label{theorem_1}
 Given \( \mathcal{H}(\mathbf{z}) = F(D(\mathbf{z})) \), where both \( F \) and \( D \) are neural networks employing ReLU activations, and \( C_i \) is a convex polytope in \( \mathbb{R}^{d_z} \) defined in Equation~\eqref{eq:conv_polytope}, finding the local minimum of \( F(\mathbf{x}) \) over \( \mathbf{x} \in D(\mathbf{z}) \) is equivalent to finding a local minimum of \( \mathcal{H}(\mathbf{z}) \) over \( \mathbf{z} \in C_i \). Formally,
    \begin{displaymath}
        \min_{\mathbf{x} \in D(C_i)} F(\mathbf{x}) \ \equiv \ \min_{\mathbf{z} \in C_i} \mathcal{H}(\mathbf{z})
    \end{displaymath}
    
\end{theorem}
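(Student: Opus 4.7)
The plan is to reduce the theorem to a substitution identity between the two optimization problems, then use the piecewise-linear structure of ReLU networks to extend value-level equivalence to minimizer-level equivalence. First, I would establish set equality of the two objective-value sets. By the definition of the image set $D(C_i) = \{D(\mathbf{z}) : \mathbf{z} \in C_i\}$ and of the composition $\mathcal{H} = F \circ D$, every value $F(\mathbf{x})$ with $\mathbf{x} \in D(C_i)$ arises as $F(D(\mathbf{z})) = \mathcal{H}(\mathbf{z})$ for some $\mathbf{z} \in C_i$; conversely every $\mathcal{H}(\mathbf{z})$ equals $F(D(\mathbf{z}))$ with $D(\mathbf{z}) \in D(C_i)$. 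This yields $\{F(\mathbf{x}) : \mathbf{x} \in D(C_i)\} = \{\mathcal{H}(\mathbf{z}) : \mathbf{z} \in C_i\}$ as sets of real numbers, so in particular the infima coincide. This alone suffices for the verification-tool application of Equation~\eqref{eq:nnv_optimization}, where only the tightest attainable bounds are consumed downstream.

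Second, I would upgrade this value-level coincidence to a correspondence of minimizers. Since both $F$ and $D$ are ReLU networks, each is piecewise affine over a finite polyhedral partition of its input domain. Intersecting the partition induced by $D$ with the convex polytope $C_i$ produces a finite family of convex cells $\{C_i^{(k)}\}$ on each of which $D$ is affine; composing with the piecewise-affine $F$ shows that $\mathcal{H}$ is piecewise affine on $C_i$. On each cell, minimization reduces to minimizing an affine function over a polytope, which is attained at a vertex. The affine restriction $D|_{C_i^{(k)}}$ carries such a vertex $\mathbf{z}^*$ to $\mathbf{x}^* = D(\mathbf{z}^*)$ on a face of the image polytope $D(C_i^{(k)}) \subseteq D(C_i)$, and $F(\mathbf{x}^*) = \mathcal{H}(\mathbf{z}^*)$ by construction, so local optimality transfers. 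The converse direction uses the same cell-wise affine structure to pull back any local minimizer $\mathbf{x}^* \in D(C_i)$ to a preimage in $C_i$ of equal value.

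The main obstacle is that $D$ need not be injective, so an arbitrary preimage of a local minimizer $\mathbf{x}^*$ of $F$ need not itself be a local minimizer of $\mathcal{H}$; worse, the fibers of $D$ can cross cell boundaries where $\mathcal{H}$ is non-smooth. I would handle this by restricting the correspondence to operate cell-by-cell rather than globally, exploiting the fact that within any single cell $C_i^{(k)}$ the restricted map $D|_{C_i^{(k)}}$ is affine and hence sends minimizers to minimizers of the same value. To sidestep the non-smooth-boundary issue I would invoke the standard fact that a continuous piecewise-affine function on a bounded polytope attains its extrema at vertices of some refined cell, so the verification reduces to checking a finite vertex set on either side, where the equality $F(D(\mathbf{z})) = \mathcal{H}(\mathbf{z})$ is trivial. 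Finally, I would remark that an analogous argument applies verbatim to the maximum, yielding the two-sided optimization equivalence required by Equation~\eqref{eq:nnv_optimization} and hence the soundness of operating the verifier on $\mathcal{H}$ over $C_i$ rather than on $F$ over $D(C_i)$.
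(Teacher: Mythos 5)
Your proposal is correct, and it proves the displayed identity by a genuinely different and more economical route than the paper. Your first paragraph --- the observation that $\{F(\mathbf{x}) : \mathbf{x} \in D(C_i)\} = \{\mathcal{H}(\mathbf{z}) : \mathbf{z} \in C_i\}$ as sets of reals, by the very definition of the image set and of composition --- is a complete one-step proof of the stated equality of minima (attainment following from compactness of $C_i$ and continuity of $F \circ D$), and it is exactly what the downstream optimization problem in Equation~\eqref{eq:formal_obj_func} needs. The paper instead routes the argument through Lemma~\ref{lemma_1} and Lemma~\ref{lemma_2} to tie the latent polytope to the action set semantically, then leans on piecewise linearity of ReLU networks and asserts that $D$ ``provides a bijective linear mapping within these regions'' --- an assertion that is neither justified (a decoder from $\mathbb{R}^{d_z}$ to $\mathbb{R}^{h \times w}$ need not be injective even on a single affine cell) nor necessary for the value-level claim. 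You explicitly identify non-injectivity of $D$ as the obstacle and neutralize it with the cell-wise affine decomposition and vertex attainment, which is more careful than the paper's treatment; what the paper's framing buys in exchange is the interpretive link to the action sets $A_i$, which your substitution argument does not need but which the surrounding verification narrative uses. One small correction to your third paragraph: the direction that actually fails under non-injectivity is pushing a local minimizer $\mathbf{z}^*$ of $\mathcal{H}$ forward to $\mathbf{x}^* = D(\mathbf{z}^*)$ (nearby points of $D(C_i)$ may have distant preimages with smaller $F$-value), whereas pulling back a local minimizer of $F$ to any preimage always works by continuity of $D$; you state the failure in the opposite direction. This does not damage your argument, since your finite vertex check sidesteps the issue entirely, but the remark should be flipped.
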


\begin{proof}\label{sec:proof_3}
By \textbf{Lemma~\ref{lemma_1}}, for each action \( \mathbf{a} \in A \), there exists a latent variable \( \mathbf{z} \in Z \) such that the pair \( (\mathbf{z}, \mathbf{a}) \) is in \( Z \times A \) with positive probability under the prior \( p(\mathbf{z}) \). This ensures that the latent space \( Z \) is meaningfully connected to the action space \( A \), and the mapping between images and actions is preserved in the latent space.

According to \textbf{Lemma~\ref{lemma_2}}, any latent variable \( \mathbf{z} \in C_i \) can be decoded using \( D(\mathbf{z}) \) to generate a reconstructed image \( \hat{\mathbf{x}} \). This image forms a pair \( (\hat{\mathbf{x}}, \mathbf{a})^i \) with an action \( \mathbf{a} \in A_i \), indicating that the decoder \( D \) maps the convex polytope \( C_i \) in latent space back to meaningful images in the original space \( X \).

Neural networks with ReLU activations, such as \( F \) and \( D \), are known to be piecewise linear functions~\cite{montufar2014number, raghu2017expressive}. They partition their input spaces into polyhedral regions within which the functions act linearly. The composition \( \mathcal{H}(\mathbf{z}) = F(D(\mathbf{z})) \) thus inherits this piecewise linearity because the composition of piecewise linear functions is also piecewise linear~\cite{bronstein2017geometric}.

From the properties established in \textbf{Lemma~\ref{lemma_2}}, \( D \) maps the convex polytope \( C_i \) in latent space to a corresponding polyhedral region in the image space \( X \). This means that optimizing \( F(\mathbf{x}) \) over \( \mathbf{x} \in D(C_i) \) is equivalent to optimizing \( \mathcal{H}(\mathbf{z}) \) over \( \mathbf{z} \in C_i \), since \( D \) provides a bijective linear mapping within these regions.

Moreover, since a local minimum of a piecewise linear function occurs at a vertex or along an edge of its polyhedral regions~\cite{clarke1998nonsmooth}, the local minima of \( \mathcal{H}(\mathbf{z}) \) over \( \mathbf{z} \in C_i \) correspond directly to the local minima of \( F(\mathbf{x}) \) over \( \mathbf{x} \in D(C_i) \). Therefore, optimizing \( F(\mathbf{x}) \) over the high-dimensional space \( X_i \) is equivalent to optimizing \( \mathcal{H}(\mathbf{z}) \) over the lower-dimensional latent space polytope \( C_i \), establishing the equivalence of the two optimization problems~\cite{kawaguchi2016deep, choromanska2015loss}.

\end{proof}

Consequently, Theorem~\ref{theorem_1} demonstrates that minimizing \( F(\mathbf{x}) \) over \( \mathbf{x} \in X_i\) is equivalent to minimizing \( \mathcal{H}(\mathbf{z}) \) over \( \mathbf{z} \in C_i \). Referring to Equations~\eqref{eq:nnv_1} and ~\eqref{eq:nnv_optimization}, the verification process thus for the combined network \( \mathcal{H}(\mathbf{z}) \) can be formalized as:
\begin{displaymath}
    \mathbf{z} \in C_i  \implies \mathcal{H}(\mathbf{z}) \in A_i
\end{displaymath}

and the optimization problem to be solved to conduct the verification process can be formalized as:
\begin{equation}\label{eq:formal_obj_func}
    \begin{aligned}
        &a_{\min} = \min_{\mathbf{z} \in C_i} \mathcal{H}(\mathbf{z}), \quad && a_{\max} = \max_{\mathbf{z} \in C_i} \mathcal{H}(\mathbf{z}) \\
        \text{subject to} \quad & a_{\min} \in A_i, \quad
        &&a_{\max} \in A_i,
    \end{aligned}
\end{equation}

where \( C_i \) and \( A_i \) are correlated as described in Lemma~\ref{lemma_2}. It is important to note that the bounds of each action set \( A_i \subseteq A \) depend on the formal specifications being verified and correspond to a specific convex polytope \( C_i \subset \mathbb{R}^{d_z} \) in the \( d_z \)-dimensional latent space.

\subsection{Augmented Latent Spaces for Robustness Verification}\label{sec:aug_latent_space}

In prior sections, we developed the SEVIN framework, which leverages latent representations to formally verify neural network controllers. These representations are derived from the dataset $X$, consisting solely of unperturbed images captured by the front camera of an AV. Consequently, both the reconstructed dataset $\hat{X}$ and the set of convex polytopes $C$ correspond exclusively to clean data. In this section, we will utilize the SEVIN framework to also conduct robustness verification of the neural network controller. 

\begin{figure}[h]
    \centering
    \includegraphics[width=\columnwidth]{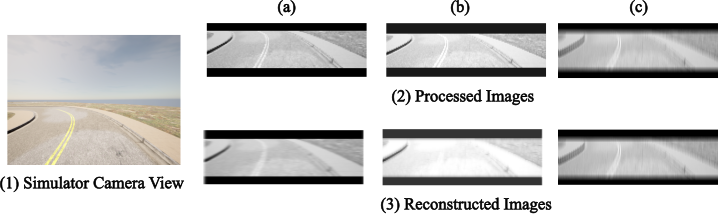}
    \caption{Illustration of the data pre-processing, augmentation, and reconstruction steps used in our approach. (1) Original front camera view captured from the AV within the \texttt{CARLA} driving simulator environment on a two-lane track. (2)(a) Resized and cropped images before training. (b) Augmented image with brightness level adjusted by a random factor $\delta \in [-0.2, 0.2]$. (c) Augmented image with motion blur applied, with the degree of blur (kernel size) $\delta \in [1,6] \cap \mathbb{Z}$. (3) Reconstructions of the images from (2) generated by the trained VAE's.}
    \label{fig:img_augmentations}
\end{figure}

Figure~\ref{fig:img_augmentations} illustrates some augmentations applied to the dataset of clean images. To verify the robustness of neural network controllers, we construct latent space representations for two datasets, each incorporating a different augmentation type: (1) Image Brightness, (2) Motion Blur, following the SEVIN approach. Each augmentation is quantified and applied to the dataset $X = \{\mathbf{x}\}_{j=1}^M$, resulting in an augmented images only dataset $\bar{X} = \bar{\{\mathbf{x}\}}_{j=1}^M$. The augmented images are then combined with the original dataset to form the new combined augmented and clean dataset, $\bar{X} = X \cup \bar{X}$. From $\bar{X}$, we generate a latent space representation $\bar{Z}$ and the set of convex polytopes $\bar{C} = \bigcup_{i=1}^I \bar{C}_i$, following the SEVIN formulation outlined in Section~\ref{sec:conv_poly}. 

Notably, for any action $\mathbf{a} \in A$, $(\bar{\mathbf{x}}, \mathbf{a}) \in \bar{X} \times A$. This is due the fact that the augmentation is applied only to the image ($\mathbf{x}$) and does not affect the control action ($\mathbf{a}$) to be taken by the controller. Importantly, the augmentations do not alter the action values for any latent variable $\mathbf{z} \in \bar{C}_i$, thereby preserving Lemma~\ref{lemma_1} such that:
\begin{displaymath}
    \forall a \in A, \quad p(\exists \ \bar{\mathbf{z}} \in \bar{Z} \text{ s.t. } (\bar{\mathbf{z}},\mathbf{a}) \in \bar{Z} \times A) \ > 0
\end{displaymath}

In fact, the VAE learns the augmentation applied to the image as an additional feature and can distinguish between the clean and augmented images quite precisely as seen in Figure~\ref{fig:aug_latent_space}.

Based on the optimization problem corresponding to robustness verification described in \eqref{eq:robustness_optimization}, we can use the SEVIN framework to formalize the robustness verification process as:
\begin{displaymath}
    \mathbf{z} \in \bar{C}_i \implies \mathcal{H}(\mathbf{z}) \in A_i
\end{displaymath}

where the optimization problem to be solved by the neural network verification tool is:
\begin{displaymath}\label{eq:robust_obj_func}
    \begin{aligned}
        &a_{\min} = \min_{\mathbf{z} \in \bar{C}_i} \mathcal{H}(\mathbf{z}), \quad && a_{\max} = \max_{\mathbf{z} \in \bar{C}_i} \mathcal{H}(\mathbf{z}) \\
        \text{subject to} \quad & a_{\min} \in A_i, \quad
        &&a_{\max} \in A_i
    \end{aligned}
\end{displaymath}

Section~\ref{sec:image_augmenations} shows the type of augmentations along with the range of values that are applied to the dataset $X$. Individual VAE's are trained for each augmented dataset $\bar{X}$.

\begin{figure}[h]
    \centering
    \includegraphics[width=\columnwidth]{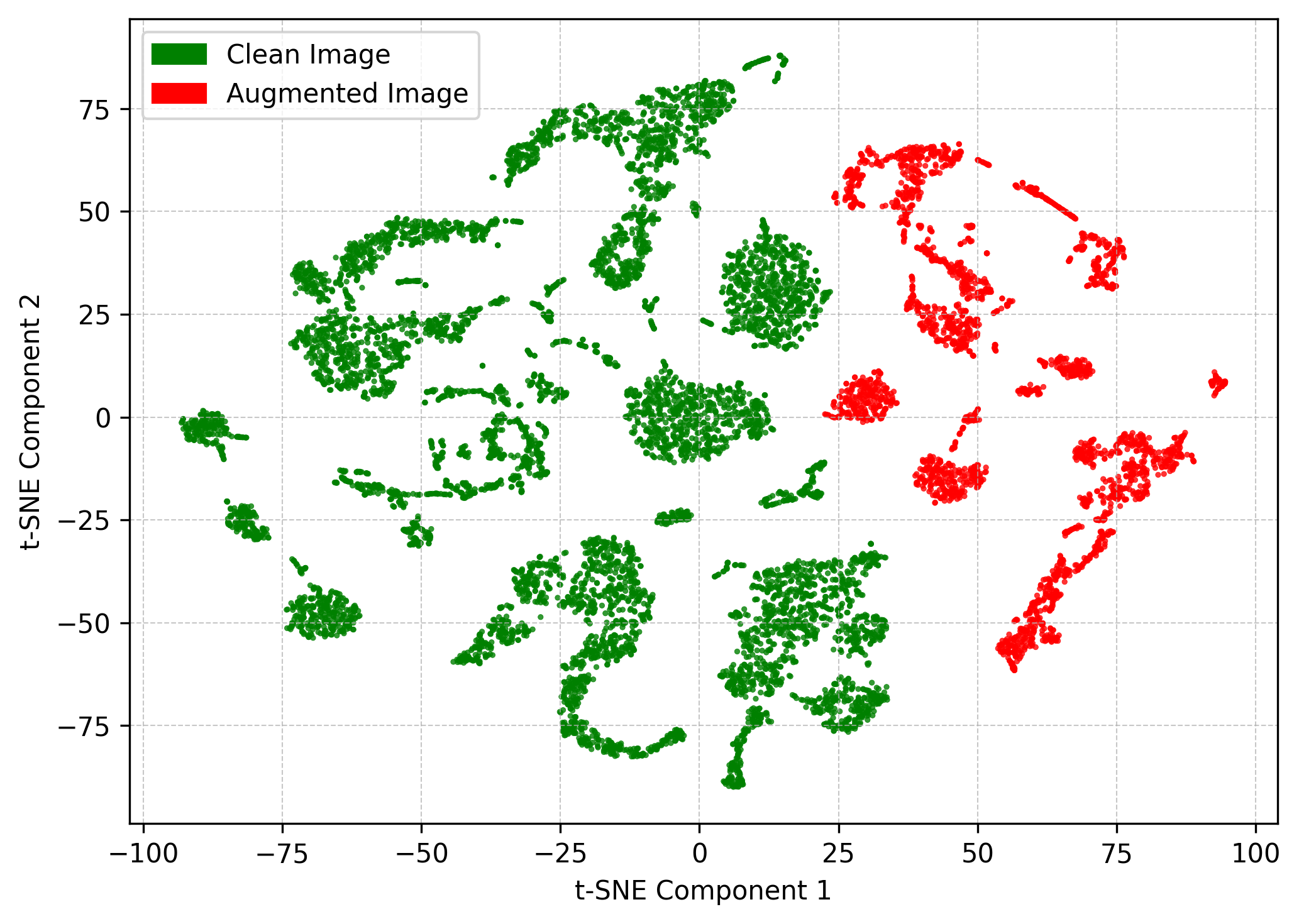}
    \caption{t-SNE plot of an 8D latent space representation generated for the clean ($X$) and motion blur augmented image dataset ($\bar{X}$)}
    \label{fig:aug_latent_space}
\end{figure}

By verifying the neural network controller over $\bar{C}_i$, we assess its robustness to input perturbations. SEVIN unifies formal verification and robustness verification within a single framework, leveraging generative AI techniques to scale the verification problem to the continuous domain.

\subsection{Designing Symbolic Formal Specifications}\label{sec:symbolic_properties}

The combined network $\mathcal{H}(\mathbf{z})$ undergoes verification against a set of \textbf{SAFETY} and \textbf{PERFORMANCE} specifications within both general formal verification and robustness verification frameworks. These specifications adhere to the Verification of Neural Network Library (VNN-LIB) standard, which is widely recognized for neural network verification benchmarks \cite{guidotti2023vnnlib}. The VNN-LIB specification standard builds upon the Open Neural Network Exchange (ONNX) format for model description and the Satisfiability Modulo Theory Library (SMT-LIB) format for property specification, ensuring compatibility and interoperability across various verification tools and platforms. The VNN-LIB standard allows designers to specify bounds on each input and output parameter of the neural network under verification, providing a highly expressive framework for defining verification constraints. The specifications are meticulously crafted to align with the driving scenarios outlined in Section~\ref{sec:driving_scenarios} 

\textbf{SAFETY} specifications are designed to guarantee that the neural network controller does not produce unsafe action values within a defined input polytope $C_i$. For instance, in natural language, a \textbf{SAFETY} specification for the  driving scenario might state that the neural network controller "always predicts a \emph{RIGHT} turn if the input image indicates a \emph{RIGHT} turn". Formally, this can be expressed as:
\begin{displaymath} \label{eq:safety}
    \varphi_\text{SAFETY}\coloneq \ \forall \mathbf{z} \in C_\text{right}, \{a_{\text{min}},a_{\text{max}}\} \in A_{\text{right}}
\end{displaymath}

where $\{a_{\text{min}},a_{\text{max}}\}$ can be calculated by solving the optimization problem from Equation~\eqref{eq:formal_obj_func}. $C_\text{right}$ represents the $d_z$-dimensional convex polytope in the latent space $Z$ corresponding to images indicative of right turns. The specification denotes for all latent variables $\mathbf{z}$ within $C_\text{right}$, the network's output $\mathcal{H}(\mathbf{z})$ belongs exclusively to the set of steering actions $A_{\text{right}}$ associated with a \emph{RIGHT} turn.

Conversely, \textbf{PERFORMANCE} specifications aim to ensure that a bounded set of control actions can be achieved from an input convex polytope within the latent space corresponding to the specification. This facilitates the assessment of the neural network controller's performance across different regions of the input space. The process involves partitioning the entire range of control actions into distinct subsets $A_{i=1}^I \subseteq A$. For each action subset $A_i$, the corresponding convex polytope $C_i$ in the latent space is determined as described in Section~\ref{sec:conv_poly}. An exemplary \textbf{PERFORMANCE} specification is presented below:
\begin{displaymath} \label{eq:performance}
    \varphi_\text{PERFORMANCE}\coloneq \ \forall \mathbf{z} \in C_\text{[0.2,0.5]}, \ \{0.2 \leq [a_{\text{min}},a_{\text{max}}] \leq 0.5\}
\end{displaymath}

where $\{a_{\text{min}},a_{\text{max}}\}$ can be calculated by solving the optimization problem from Equation~\eqref{eq:formal_obj_func}. In this context, $[0.2, 0.5]$ delineates the range of steering action values within which the combined network $\mathcal{H}(\mathbf{z})$ is expected to predict a steering value for all $\mathbf{z} \in C_{[0.2, 0.5]}$. This formalization ensures that the controller operates within the desired performance bounds across specified input regions.

\begin{table*}[t]
\centering
\caption{Verification Results for \textit{Robust Formal} Verification using SEVIN}
\label{tab:results_robust}
\scalebox{0.7}{
\begin{tabular}{|c|c|c|c|c|c|c|c|c|c|c|c|c|}
\hline
\multirow{2}{*}{Augmentation} & \multirow{2}{*}{Specification} & \multirow{2}{*}{Formula} & \multicolumn{2}{c|}{\textbf{NvidiaNet} $\delta_1$ = [80-120]\%} & \multicolumn{2}{c|}{\textbf{NvidiaNet} $\delta_2$ = [60-140]\%} & \multicolumn{2}{c|}{\textbf{ResNet18} $\delta_1$ = [80-120]\%} & \multicolumn{2}{c|}{\textbf{ResNet18} $\delta_2$ = [60-140]\%} & \multicolumn{2}{c|}{\textbf{ResNet18} $\delta_3$ = [50-150]\%} \\ \cline{4-13} 
                              &                                &                         & Result         & Time(s)       & Result         & Time(s)       & Result         & Time(s)        & Result         & Time(s)        & Result         & Time(s)        \\ \hline
\multirow{5}{*}{Brightness}   & \multirow{2}{*}{Safety}        & $\varphi_1$             & SAT            & 0.412         & SAT            & 0.4027        & SAT            & 0.6189         & SAT            & 0.6293         & SAT            & 0.7109         \\
                              &                                & $\varphi_2$             & SAT            & 0.3739        & SAT            & 0.3637        & SAT            & 0.565          & SAT            & 0.6001         & UNSAT          & -              \\ \cline{2-13} 
                              & \multirow{3}{*}{Performance}   & $\varphi_3$             & SAT            & 0.6075        & SAT           & 0.5804         & SAT            & 0.637          & SAT            & 0.6332         & SAT            & 0.6132         \\
                              &                                & $\varphi_4$             & SAT            & 0.326         & SAT            & 0.39          & SAT            & 0.5933         & SAT            & 0.5588         & SAT            & 0.5860         \\
                              &                                & $\varphi_5$             & UNSAT          & -             & UNSAT          & -             & SAT            & 0.853          & SAT            & 0.8011         & UNSAT          & -              \\ \hline
\multirow{2}{*}{} & \multirow{2}{*}{} & \multirow{2}{*}{} & \multicolumn{2}{c|}{\textbf{NvidiaNet} $\delta_1$ = \{1,2\}} & \multicolumn{2}{c|}{\textbf{NvidiaNet} $\delta_2$ = \{3,4\}} & \multicolumn{2}{c|}{\textbf{ResNet18} $\delta_1$ = \{1,2\}} & \multicolumn{2}{c|}{\textbf{ResNet18} $\delta_2$ = \{3,4\}} & \multicolumn{2}{c|}{\textbf{ResNet18} $\delta_3$ = \{5,6\}} \\ \cline{4-13} 
\multirow{5}{*}{Motion Blur}  & \multirow{2}{*}{Safety}        & $\varphi_1$             & SAT            & 0.304         & SAT            & 0.33          & SAT            & 0.6134         & SAT            & 0.6236         & SAT            & 0.7109         \\
                              &                                & $\varphi_2$             & SAT            & 0.3768        & SAT            & 0.367         & SAT            & 0.629          & SAT            & 0.627          & SAT            & 0.6193         \\ \cline{2-13} 
                              & \multirow{3}{*}{Performance}   & $\varphi_3$             & SAT            & 0.5731         & SAT          & 0.5043         & SAT            & 0.616          & SAT            & 0.6477        & SAT          & 0.8881         \\
                              &                                & $\varphi_4$             & SAT            & 0.3745        & SAT            & 0.4403        & SAT            & 0.6776         & SAT            & 0.6702         & SAT          & 0.7856         \\
                              &                                & $\varphi_5$             & SAT            & 0.5621        & UNSAT          & -             & SAT            & 0.822          & SAT            & 0.555          & SAT          & 1.816              \\ \hline
\end{tabular}
}
\end{table*}

\section{Experiments}\label{sec:experiments}

To test our proposed approach on an image based neural network controller, we choose an autonomous driving scenario where an AV drives itself around a track in a simulator. One of the goals of our experiments is to test and see if we can generate an interpretable latent representation of the image dataset collected by the front camera images. Once we do that, we want to make sure that we can configure the convex polytopes in the latent space as inputs to the verification problem. Finally, we aim to evaluate the controller's performance by conducting both \textit{vanilla formal} and \textit{robust formal} verifications, and compare the performance metrics of our approach to a general image-based neural network robustness verification problem (see more in Section~\ref{sec:scalability_comparison}). The VAE's and neural network controllers are trained on 2x NVIDIA A100 GPU's and the formal verification is carried out on a NVIDIA RTX 3090 GPU with 24GB of VRAM.

\subsection{Driving Scenarios}\label{sec:driving_scenarios}
The driving simulator collects RGB images ($\mathbf{x}$) from the AV's front-facing camera along with the steering control actions ($\mathbf{a}$). The AV drives on a custom, single-lane track created in {\fontfamily{cmtt}\selectfont RoadRunner} by {\fontfamily{cmtt}\selectfont MathWorks} and simulated in the {\fontfamily{cmtt}\selectfont CARLA} environment. The simulator's autopilot mode autonomously drives the vehicle, collecting control data, including the steering angle, as the control action ($\mathbf{a}$). The RGB images ($\mathbf{x}$) are resized to 80x64 grayscale images to reduce dimensionality while retaining essential lane information. A snapshot of the front camera view and the processed images used for training can be seen in Figure~\ref{fig:img_augmentations}.

We ensure that the camera captures features relevant to the \textbf{SAFETY} and \textbf{PERFORMANCE} properties discussed in Section~\ref{sec:symbolic_properties}. The images are pre-processed to retain only essential lane marking features, which reduces learning redundant features by the VAE, allowing for an easily distinguishable latent space based on differing action sets $A_i$.

\begin{table}[H]
\centering
\caption{Hyperparameters for VAE Training}
\scalebox{0.8}{
\begin{tabular}{|l|c|}
\hline
\textbf{Parameter}                 & \textbf{Value}        \\ \hline
Latent Dimension                   & 8                    \\ \hline
Optimizer                          & Adam                  \\ \hline
Learning Rate                      & \(1 \times 10^{-5}\)  \\ \hline
Weight Decay                       & \(1 \times 10^{-4}\) \\ \hline
Epochs                             & 20                   \\ \hline
$\beta$                            & 0.01                 \\ \hline
\end{tabular}
}
\end{table}

\subsection{Network Architectures}\label{sec:nnc_arch}
The GM-VAE used in our approach consists of an encoder and a decoder network, with convolutional and transposed convolutional layers, respectively, to process and reconstruct data. The encoder progressively increases channel sizes, while the decoder reduces them in reverse order. Each layer integrates batch normalization, ReLU activations, and dropout to prevent overfitting. The encoder begins with a linear layer, followed by three convolutional layers with increasing channel sizes: from 1 to 64, then 128, 256, and finally 512 channels. We employ \( K = 16 \) Gaussians in the mixture model to enhance the expressiveness of the latent representation. A Sigmoid activation function is applied at the output layer to ensure that the generated pixel values are within the range \([0, 1]\).

\begin{table}[H]
\centering
\caption{Results for \textit{Vanilla Formal} Verification using SEVIN}
\label{tab:results_vanilla}
\scalebox{0.8}{
\begin{tabular}{|c|c|c|c|c|c|}
\hline
\multicolumn{2}{|c|}{\multirow{3}{*}{Specification}} & \multicolumn{4}{c|}{NNC Architecture} \\ \cline{3-6}
\multicolumn{2}{|c|}{}                               & \multicolumn{2}{c|}{\textbf{NvidiaNet}} & \multicolumn{2}{c|}{\textbf{ResNet18}} \\ \cline{3-6}
\multicolumn{2}{|c|}{}                               & Results            & Time(s)          & Results            & Time(s)          \\ \hline
\multirow{2}{*}{Safety}      & $\varphi_1$          & SAT                & 0.417            & SAT                & 0.6343           \\ \cline{2-6}
                             & $\varphi_2$          & SAT                & 0.4430           & SAT                & 0.7023           \\ \hline
\multirow{3}{*}{Performance} & $\varphi_3$          & SAT                & 0.412            & SAT                & 0.6799           \\ \cline{2-6}
                             & $\varphi_4$          & SAT                & 0.3667           & SAT                & 0.5746           \\ \cline{2-6}
                             & $\varphi_5$          & SAT                & 0.6383           & SAT                & 0.8215           \\ \hline
\end{tabular}}
\end{table}

\subsection{Results}\label{results}

We evaluated our proposed method using the neural network verification tool \texttt{$\alpha-\beta-CROWN$}, offering certified bounds on model outputs under specified perturbations. This tool is suitable for verifying the safety and reliability of neural network controllers in autonomous systems. To assess both \textit{vanilla formal} and \textit{robust formal} verification methods (Section~\ref{sec:aug_latent_space}), we employed two \textbf{SAFETY} specifications and three \textbf{PERFORMANCE} specifications as described in Section~\ref{sec:symbolic_properties}.

\subsubsection{Image Augmentations}\label{sec:image_augmenations}

For the \textit{robust formal verifications}, we applied different levels ($\delta_{1,2,3}$) of augmentations to the image dataset to generate $\bar{X}$ for training the VAEs. The types and quantifications of the image augmentations are as follows:

\begin{itemize}
    \item \textbf{Brightness}: Datasets were generated by randomly varying image brightness levels within specified ranges $\delta_i$:
    \begin{itemize}
        \item $\delta_1$: 80\% to 120\% of original brightness.
        \item $\delta_2$: 60\% to 140\% of original brightness.
        \item $\delta_3$: 50\% to 150\% of original brightness.
    \end{itemize}
    \item \textbf{Vertical Motion Blur}: Datasets were generated by varying the degree of vertical motion blur kernels within the ranges:
    \begin{itemize}
        \item $\delta_1$: Kernel sizes of 1 and 2 pixels.
        \item $\delta_2$: Kernel sizes of 3 and 4 pixels.
        \item $\delta_3$: Kernel sizes of 5 and 6 pixels.
    \end{itemize}
\end{itemize}

\subsubsection{Specifications}

The \textbf{SAFETY} specifications used to verify the controller are defined as:
\begin{equation} \label{eq:exp_safety}
    \begin{aligned}
        &\varphi^1_\text{SAFETY} \coloneq \forall \mathbf{z} \in C_{[-ve]}, \quad \mathcal{H}(\mathbf{z}) \leq 0.0, \\
        &\varphi^2_\text{SAFETY} \coloneq \forall \mathbf{z} \in C_{[+ve]}, \quad \mathcal{H}(\mathbf{z}) \geq 0.0,
    \end{aligned}
\end{equation}

where $C_{[-ve]}$ and $C_{[+ve]}$ denote the latent space regions corresponding to negative and positive control actions, respectively.

The \textbf{PERFORMANCE} specifications are defined as:
\begin{displaymath} \label{eq:exp_perform}
    \begin{aligned}
        &\varphi^1_\text{PERFORM} \coloneq \forall \mathbf{z} \in C_{[-0.4,-0.1]}, \quad -0.4 \leq \mathcal{H}(\mathbf{z}) \leq -0.1, \\
        &\varphi^2_\text{PERFORM} \coloneq \forall \mathbf{z} \in C_{[-0.1,0.1]}, \quad -0.1 \leq \mathcal{H}(\mathbf{z}) \leq 0.1, \\
        &\varphi^3_\text{PERFORM} \coloneq  \forall \mathbf{z} \in C_{[0.1,0.4]}, \quad 0.1 \leq \mathcal{H}(\mathbf{z}) \leq 0.4,
    \end{aligned}
\end{displaymath}

where $C_{[a,b]}$ represents the latent space regions corresponding to control actions between $a$ and $b$.

These specifications were used for both \textit{vanilla formal} and \textit{robust formal} verification methods. The verification results, along with the input formal specifications, applied image augmentations, and evaluated neural network controllers, are summarized in Table~\ref{tab:results_vanilla} and Table~\ref{tab:results_robust}.

\subsubsection{Vanilla Formal Verification:}\label{sec:vani_formal_ver_results}
From the results, we can infer that both the vanilla and robust formal verification processes take < 1 second to conduct verification. This is due to the reduction in computational complexity of the processes as we introduce lower dimensional input spaces in the form of convex polytopes ($C$). Additionally, we also note that for the vanilla formal verification, the neural network controllers satisfy all of the specifications provided. This indicates that the controllers provide formal guarantees with respect to both the \textbf{SAFETY} and \textbf{PERFORMANCE} specifications when the input space $C_i$ belongs to clean image sets $X_i$. 

\subsubsection{Robust Formal Verification:}\label{sec:rob_formal_ver_results}
For the \textit{robust formal} verification process, we can notice specification $\varphi_5$ to be unsatisfactory for the NvidiaNet architecture, for both levels and types of augmentations, $\delta_1$ and $\delta_2$. The NvidiaNet architecture thus seems to be more susceptible to image augmentations and fairs poorly during the robustness verification process. In contrast, the ResNet18 architecture tends to perform fairly better than NvidiaNet for the $\delta_1$ and $\delta_2$ levels of augmentations for both \textbf{vertical motion blur} and \textbf{brightness} variation. It starts providing unsatisfactory verification results once the $\delta_3$ level of augmentations are applied to both types of image augmentations. This shows that the ResNet18 architecture is more robust than NvidiaNet in handling image perturbations for the case of autonomous driving scenarios.

\subsubsection{Scalability Comparison:}\label{sec:scalability_comparison}
We conduct general robustness verification on the neural network controller \( F(\mathbf{x}) \) using the $\alpha-\beta-CROWN$ toolbox and compare these results with those obtained via the SEVIN framework. We employ the same set of \textbf{SAFETY} specifications as described in Equation~\eqref{eq:exp_safety}, and we separate the brightness-augmented dataset \( \bar{X} \) into two subsets, \( \bar{X}_{[-\text{ve}]} \) and \( \bar{X}_{[+\text{ve}]} \), based on their corresponding negative and positive control action sets, \( A_{[-\text{ve}]} \) and \( A_{[+\text{ve}]} \). By converting \( F(\mathbf{x}) \) into the ONNX format and defining perturbation bounds for each dimension (i.e., pixel) of \( \mathbf{x} \), we leverage the neural network verification tool to perform robustness verification directly on the original high-dimensional image inputs. The upper and lower bounds are calculated for all images \( \mathbf{x} \in \bar{X}_{[-\text{ve}]} \) and \( \mathbf{x} \in \bar{X}_{[+\text{ve}]} \). The results are shown in Table~\ref{tab:results_sc}.\\
When comparing the results between Table~\ref{tab:results_robust} and \ref{tab:results_sc}, we observe that all the specifications are \textbf{SAT} for both methods. However, a significant difference lies in the time taken by the general method, which is almost ten times longer than that of the SEVIN method. This substantial difference is due to the fact that the input dimensionality of the verification problem using the SEVIN framework is approximately 600 times smaller than that of the general framework, making the problem computationally less complex and, consequently, more scalable to larger networks. Although for SEVIN, the combined network ($\mathcal{H}$) to be verified has many more layers than the neural network controller ($F$).

\begin{table}[H]
\centering
\caption{Results for General \textit{Robust Formal} Verification using the Neural Network Verification toolbox}
\label{tab:results_sc}
\scalebox{0.8}{
\begin{tabular}{|c|c|c|c|c|c|c|}
\hline
\multirow{3}{*}{\textbf{Augmentation}} & \multicolumn{2}{|c|}{\multirow{3}{*}{\textbf{Specification}}} & \multicolumn{4}{c|}{\textbf{NNC Architecture}} \\ \cline{4-7}
                                       & \multicolumn{2}{|c|}{}                                        & \multicolumn{2}{c|}{\textbf{NvidiaNet}}        & \multicolumn{2}{c|}{\textbf{ResNet18}}        \\ \cline{4-7}
                                       & \multicolumn{2}{|c|}{}                                        & \textbf{Results}      & \textbf{Time(s)}     & \textbf{Results}     & \textbf{Time(s)}     \\ \hline
\multirow{2}{*}{\textbf{Brightness}}   & \multirow{2}{*}{\textbf{Safety}}     & $\varphi_1$             & SAT                  & 3.213                & SAT                 & 4.257                \\ \cline{3-7}
                                       &                                      & $\varphi_2$             & SAT                  & 4.165                & SAT                 & 4.958                \\ \hline
\end{tabular}}
\end{table}

\section{Conclusion}
We provide a framework for scalable and interpretable formal verification of image based neural network controllers (SEVIN). Our approach involves developing a trained latent space representation for the image dataset used by the neural network controller. By using control action values as labels, we classify the latent variables and generate convex polytopes as input spaces for the verification process. We concatenate the decoder network of the Variational Autoencoder (VAE) with the neural network controller, allowing us to directly map lower-dimensional latent variables to higher-dimensional control action values. Once the verification input space is made interpretable, we construct specifications using symbolic languages such as Linear Temporal Logic (LTL). We then provide the neural network verification tool with the combined network and the specifications for verification. To enhance the formal robustness verification process, we generate an augmented dataset and retrain the VAEs to produce new latent representations.Finally, we test the SEVIN framework on two neural network controllers in an autonomous driving scenario, using two sets of specifications—\textbf{SAFETY} and \textbf{PERFORMANCE}—for both the standard formal verification and robustness verification processes. In the future, we aim to conduct reachability analysis using the SEVIN framework for neural network controllers and the plant model they control. We also aim to improve the performance of the decoder network to minimize reconstruction loss by using improved neural architectures.

\bibliographystyle{ACM-Reference-Format}
\bibliography{references}


\begin{thebibliography}{39}


\ifx \showCODEN    \undefined \def \showCODEN     #1{\unskip}     \fi
\ifx \showDOI      \undefined \def \showDOI       #1{#1}\fi
\ifx \showISBNx    \undefined \def \showISBNx     #1{\unskip}     \fi
\ifx \showISBNxiii \undefined \def \showISBNxiii  #1{\unskip}     \fi
\ifx \showISSN     \undefined \def \showISSN      #1{\unskip}     \fi
\ifx \showLCCN     \undefined \def \showLCCN      #1{\unskip}     \fi
\ifx \shownote     \undefined \def \shownote      #1{#1}          \fi
\ifx \showarticletitle \undefined \def \showarticletitle #1{#1}   \fi
\ifx \showURL      \undefined \def \showURL       {\relax}        \fi
\providecommand\bibfield[2]{#2}
\providecommand\bibinfo[2]{#2}
\providecommand\natexlab[1]{#1}
\providecommand\showeprint[2][]{arXiv:#2}

\bibitem[Akazaki and Liu(2018)]%
        {akazaki2018falsification}
\bibfield{author}{\bibinfo{person}{Takumi Akazaki} {and} \bibinfo{person}{Yang Liu}.} \bibinfo{year}{2018}\natexlab{}.
\newblock \showarticletitle{Falsification of Cyber-Physical Systems Using Deep Reinforcement Learning}. In \bibinfo{booktitle}{\emph{International Symposium on Formal Methods}}. Springer, \bibinfo{pages}{456--465}.
\newblock


\bibitem[Al-Nuaimi et~al\mbox{.}(2021)]%
        {Al-Nuaimi2021Hybrid}
\bibfield{author}{\bibinfo{person}{Mohammed Al-Nuaimi}, \bibinfo{person}{Sapto Wibowo}, \bibinfo{person}{Hongyang Qu}, \bibinfo{person}{Jonathan Aitken}, {and} \bibinfo{person}{Sandor Veres}.} \bibinfo{year}{2021}\natexlab{}.
\newblock \showarticletitle{Hybrid Verification Technique for Decision-Making of Self-Driving Vehicles}.
\newblock \bibinfo{journal}{\emph{Journal of Sensor and Actuator Networks}} \bibinfo{volume}{10}, \bibinfo{number}{3} (\bibinfo{year}{2021}), \bibinfo{pages}{42}.
\newblock
\urldef\tempurl%
\url{https://doi.org/10.3390/jsan10030042}
\showDOI{\tempurl}


\bibitem[Barber et~al\mbox{.}(1996)]%
        {Barber1996Quickhull}
\bibfield{author}{\bibinfo{person}{C~Bradford Barber}, \bibinfo{person}{David~P Dobkin}, {and} \bibinfo{person}{Hannu Huhdanpaa}.} \bibinfo{year}{1996}\natexlab{}.
\newblock \showarticletitle{The Quickhull Algorithm for Convex Hulls}.
\newblock \bibinfo{journal}{\emph{ACM Transactions on Mathematical Software (TOMS)}} (\bibinfo{year}{1996}).
\newblock


\bibitem[Bronstein et~al\mbox{.}(2017)]%
        {bronstein2017geometric}
\bibfield{author}{\bibinfo{person}{Michael~M Bronstein}, \bibinfo{person}{Joan Bruna}, \bibinfo{person}{Yann LeCun}, \bibinfo{person}{Arthur Szlam}, {and} \bibinfo{person}{Pierre Vandergheynst}.} \bibinfo{year}{2017}\natexlab{}.
\newblock \showarticletitle{Geometric Deep Learning: Going Beyond Euclidean Data}.
\newblock \bibinfo{journal}{\emph{IEEE Signal Processing Magazine}} (\bibinfo{year}{2017}).
\newblock


\bibitem[Bunel et~al\mbox{.}(2018)]%
        {bunel2018unified}
\bibfield{author}{\bibinfo{person}{Rudy Bunel}, \bibinfo{person}{Isil~Dillig Turkaslan}, {and} \bibinfo{person}{Philip~HS Torr}.} \bibinfo{year}{2018}\natexlab{}.
\newblock \showarticletitle{A Unified View of Piecewise Linear Neural Network Verification}.
\newblock \bibinfo{journal}{\emph{Advances in Neural Information Processing Systems}} (\bibinfo{year}{2018}).
\newblock


\bibitem[Choromanska et~al\mbox{.}(2015)]%
        {choromanska2015loss}
\bibfield{author}{\bibinfo{person}{Anna Choromanska}, \bibinfo{person}{Mikael Henaff}, \bibinfo{person}{Michael Mathieu}, \bibinfo{person}{Gerard Ben~Arous}, {and} \bibinfo{person}{Yann LeCun}.} \bibinfo{year}{2015}\natexlab{}.
\newblock \showarticletitle{The Loss Surfaces of Multilayer Networks}.
\newblock \bibinfo{journal}{\emph{Proceedings of the Eighteenth International Conference on Artificial Intelligence and Statistics}} (\bibinfo{year}{2015}).
\newblock


\bibitem[Clarke(1998)]%
        {clarke1998nonsmooth}
\bibfield{author}{\bibinfo{person}{Frank~H Clarke}.} \bibinfo{year}{1998}\natexlab{}.
\newblock \bibinfo{booktitle}{\emph{Nonsmooth Analysis and Control Theory}}.
\newblock \bibinfo{publisher}{Springer}.
\newblock


\bibitem[Dilokthanakul et~al\mbox{.}(2016)]%
        {dilokthanakul2016deep}
\bibfield{author}{\bibinfo{person}{Nat Dilokthanakul}, \bibinfo{person}{Pedro~AM Mediano}, \bibinfo{person}{Marta Garnelo}, \bibinfo{person}{Matthew~CH Lee}, \bibinfo{person}{Hugh Salimbeni}, \bibinfo{person}{Kai Arulkumaran}, {and} \bibinfo{person}{Murray Shanahan}.} \bibinfo{year}{2016}\natexlab{}.
\newblock \showarticletitle{Deep Unsupervised Clustering with Gaussian Mixture Variational Autoencoders}.
\newblock \bibinfo{journal}{\emph{arXiv preprint arXiv:1611.02648}} (\bibinfo{year}{2016}).
\newblock


\bibitem[Ehlers(2017)]%
        {ehlers2017formal}
\bibfield{author}{\bibinfo{person}{R{\"u}diger Ehlers}.} \bibinfo{year}{2017}\natexlab{}.
\newblock \showarticletitle{Formal Verification of Piecewise Linear Feed-Forward Neural Networks}.
\newblock \bibinfo{journal}{\emph{arXiv preprint arXiv:1705.01320}} (\bibinfo{year}{2017}).
\newblock


\bibitem[Gehr et~al\mbox{.}(2018)]%
        {Gehr2018AI2:Networks}
\bibfield{author}{\bibinfo{person}{Timon Gehr}, \bibinfo{person}{Matthew Mirman}, \bibinfo{person}{Dana Drachsler-Cohen}, \bibinfo{person}{Petar Tsankov}, {and} \bibinfo{person}{Martin Vechev}.} \bibinfo{year}{2018}\natexlab{}.
\newblock \showarticletitle{AI2: Safety and Robustness Certification of Neural Networks with Abstract Interpretation}.
\newblock \bibinfo{journal}{\emph{2018 IEEE Symposium on Security and Privacy (SP)}} (\bibinfo{year}{2018}).
\newblock


\bibitem[Gr{\"u}nbaum(2003)]%
        {grunbaum2003convex}
\bibfield{author}{\bibinfo{person}{Branko Gr{\"u}nbaum}.} \bibinfo{year}{2003}\natexlab{}.
\newblock \bibinfo{booktitle}{\emph{Convex Polytopes}}. Vol.~\bibinfo{volume}{221}.
\newblock \bibinfo{publisher}{Springer Science \& Business Media}.
\newblock


\bibitem[Guidotti et~al\mbox{.}(2023)]%
        {guidotti2023vnnlib}
\bibfield{author}{\bibinfo{person}{Dario Guidotti}, \bibinfo{person}{Stefano Demarchi}, \bibinfo{person}{Armando Tacchella}, {and} \bibinfo{person}{Luca Pulina}.} \bibinfo{year}{2023}\natexlab{}.
\newblock \bibinfo{title}{The Verification of Neural Networks Library (VNN-LIB)}.
\newblock
\newblock
\urldef\tempurl%
\url{https://www.vnnlib.org}
\showURL{%
\tempurl}
\newblock
\shownote{Accessed: 2023}.


\bibitem[Higgins et~al\mbox{.}(2017)]%
        {higgins2017betaVAE}
\bibfield{author}{\bibinfo{person}{Irina Higgins}, \bibinfo{person}{Loic Matthey}, \bibinfo{person}{Arka Pal}, \bibinfo{person}{Christopher Burgess}, \bibinfo{person}{Alexander Glorot-Xavier}, {and} \bibinfo{person}{Matthew Botvinick}.} \bibinfo{year}{2017}\natexlab{}.
\newblock \showarticletitle{beta-VAE: Learning Basic Visual Concepts with a Constrained Variational Framework}.
\newblock \bibinfo{journal}{\emph{International Conference on Learning Representations}} (\bibinfo{year}{2017}).
\newblock


\bibitem[Huang et~al\mbox{.}(2017)]%
        {huang2017safety}
\bibfield{author}{\bibinfo{person}{Xiaowei Huang}, \bibinfo{person}{Marta Kwiatkowska}, \bibinfo{person}{Sen Wang}, {and} \bibinfo{person}{Min Wu}.} \bibinfo{year}{2017}\natexlab{}.
\newblock \showarticletitle{Safety Verification of Deep Neural Networks}.
\newblock \bibinfo{journal}{\emph{arXiv preprint arXiv:1610.06940}} (\bibinfo{year}{2017}).
\newblock


\bibitem[Julian et~al\mbox{.}(2020)]%
        {Julian2020Validation}
\bibfield{author}{\bibinfo{person}{Kyle~D. Julian}, \bibinfo{person}{Ritchie Lee}, {and} \bibinfo{person}{Mykel~J. Kochenderfer}.} \bibinfo{year}{2020}\natexlab{}.
\newblock \showarticletitle{Validation of Image-Based Neural Network Controllers through Adaptive Stress Testing}.
\newblock \bibinfo{journal}{\emph{IEEE Transactions on Intelligent Transportation Systems}} \bibinfo{volume}{22}, \bibinfo{number}{5} (\bibinfo{year}{2020}), \bibinfo{pages}{2862--2871}.
\newblock
\urldef\tempurl%
\url{https://doi.org/10.1109/TITS.2020.2988147}
\showDOI{\tempurl}


\bibitem[Kaiser et~al\mbox{.}(2021)]%
        {kaiser2021smt}
\bibfield{author}{\bibinfo{person}{Eli Kaiser}, \bibinfo{person}{Saif Ahmed}, {and} \bibinfo{person}{Tommaso Dreossi}.} \bibinfo{year}{2021}\natexlab{}.
\newblock \showarticletitle{SMT-Based Verification of Neural Network Controllers for Autonomous Systems}.
\newblock \bibinfo{journal}{\emph{IEEE Transactions on Cybernetics}} (\bibinfo{year}{2021}).
\newblock


\bibitem[Katz et~al\mbox{.}(2017a)]%
        {Katz2017Reluplex:Networks}
\bibfield{author}{\bibinfo{person}{Guy Katz}, \bibinfo{person}{Clark Barrett}, \bibinfo{person}{David~L Dill}, \bibinfo{person}{Kyle Julian}, {and} \bibinfo{person}{Mykel~J Kochenderfer}.} \bibinfo{year}{2017}\natexlab{a}.
\newblock \showarticletitle{Reluplex: An Efficient SMT Solver for Verifying Deep Neural Networks}.
\newblock \bibinfo{journal}{\emph{arXiv preprint arXiv:1702.01135}} (\bibinfo{year}{2017}).
\newblock


\bibitem[Katz et~al\mbox{.}(2017b)]%
        {KatzVerificationModels}
\bibfield{author}{\bibinfo{person}{Guy Katz}, \bibinfo{person}{Clark Barrett}, \bibinfo{person}{David~L Dill}, \bibinfo{person}{Kyle Julian}, {and} \bibinfo{person}{Mykel~J Kochenderfer}.} \bibinfo{year}{2017}\natexlab{b}.
\newblock \showarticletitle{Towards Scalable Verification for All Neural Networks}.
\newblock \bibinfo{journal}{\emph{arXiv preprint arXiv:1702.01135}} (\bibinfo{year}{2017}).
\newblock


\bibitem[Katz et~al\mbox{.}(2021)]%
        {Katz2021Verification}
\bibfield{author}{\bibinfo{person}{Sydney~M. Katz}, \bibinfo{person}{Anthony~L. Corso}, \bibinfo{person}{Christopher~A. Strong}, {and} \bibinfo{person}{Mykel~J. Kochenderfer}.} \bibinfo{year}{2021}\natexlab{}.
\newblock \showarticletitle{Verification of Image-based Neural Network Controllers Using Generative Models}.
\newblock \bibinfo{journal}{\emph{IEEE Transactions on Neural Networks and Learning Systems}} \bibinfo{volume}{33}, \bibinfo{number}{7} (\bibinfo{year}{2021}), \bibinfo{pages}{3106--3120}.
\newblock
\urldef\tempurl%
\url{https://doi.org/10.1109/TNNLS.2021.3087057}
\showDOI{\tempurl}


\bibitem[Kawaguchi(2016)]%
        {kawaguchi2016deep}
\bibfield{author}{\bibinfo{person}{Kenji Kawaguchi}.} \bibinfo{year}{2016}\natexlab{}.
\newblock \showarticletitle{Deep Learning without Poor Local Minima}.
\newblock \bibinfo{journal}{\emph{arXiv preprint arXiv:1605.07110}} (\bibinfo{year}{2016}).
\newblock


\bibitem[Kingma and Welling(2013)]%
        {kingma2013autoencoding}
\bibfield{author}{\bibinfo{person}{Diederik~P Kingma} {and} \bibinfo{person}{Max Welling}.} \bibinfo{year}{2013}\natexlab{}.
\newblock \showarticletitle{Auto-Encoding Variational Bayes}.
\newblock \bibinfo{journal}{\emph{arXiv preprint arXiv:1312.6114}} (\bibinfo{year}{2013}).
\newblock


\bibitem[Liu et~al\mbox{.}(2019)]%
        {liu2019algorithms}
\bibfield{author}{\bibinfo{person}{Changliu Liu}, \bibinfo{person}{Tamar Arnon}, \bibinfo{person}{Christopher Lazarus}, \bibinfo{person}{Alexander Strong}, \bibinfo{person}{Clark Barrett}, {and} \bibinfo{person}{Mykel~J Kochenderfer}.} \bibinfo{year}{2019}\natexlab{}.
\newblock \showarticletitle{Algorithms for Verifying Neural Networks}.
\newblock \bibinfo{journal}{\emph{arXiv preprint arXiv:1903.06758}} (\bibinfo{year}{2019}).
\newblock


\bibitem[Montufar et~al\mbox{.}(2014)]%
        {montufar2014number}
\bibfield{author}{\bibinfo{person}{Guido~F Montufar}, \bibinfo{person}{Razvan Pascanu}, \bibinfo{person}{Kyunghyun Cho}, {and} \bibinfo{person}{Yoshua Bengio}.} \bibinfo{year}{2014}\natexlab{}.
\newblock \showarticletitle{On the Number of Linear Regions of Deep Neural Networks}.
\newblock \bibinfo{journal}{\emph{arXiv preprint arXiv:1402.1869}} (\bibinfo{year}{2014}).
\newblock


\bibitem[Pnueli(1977)]%
        {pnueli1977temporal}
\bibfield{author}{\bibinfo{person}{Amir Pnueli}.} \bibinfo{year}{1977}\natexlab{}.
\newblock \showarticletitle{The Temporal Logic of Programs}.
\newblock \bibinfo{journal}{\emph{18th Annual Symposium on Foundations of Computer Science (sfcs 1977)}} (\bibinfo{year}{1977}).
\newblock


\bibitem[Raghu et~al\mbox{.}(2017)]%
        {raghu2017expressive}
\bibfield{author}{\bibinfo{person}{Maithra Raghu}, \bibinfo{person}{Ben Poole}, \bibinfo{person}{Jon Kleinberg}, \bibinfo{person}{Surya Ganguli}, {and} \bibinfo{person}{Jascha Sohl-Dickstein}.} \bibinfo{year}{2017}\natexlab{}.
\newblock \showarticletitle{On the Expressive Power of Neural Networks with Relu Activations}.
\newblock \bibinfo{journal}{\emph{arXiv preprint arXiv:1711.02060}} (\bibinfo{year}{2017}).
\newblock


\bibitem[Rezende et~al\mbox{.}(2014)]%
        {pmlr-v32-rezende14}
\bibfield{author}{\bibinfo{person}{Danilo~Jimenez Rezende}, \bibinfo{person}{Shakir Mohamed}, {and} \bibinfo{person}{Daan Wierstra}.} \bibinfo{year}{2014}\natexlab{}.
\newblock \showarticletitle{Stochastic Backpropagation and Approximate Inference in Deep Generative Models}.
\newblock  (\bibinfo{year}{2014}).
\newblock


\bibitem[Ruan et~al\mbox{.}(2018)]%
        {ruan2018reachability}
\bibfield{author}{\bibinfo{person}{Wenjie Ruan}, \bibinfo{person}{Xinming Huang}, {and} \bibinfo{person}{Marta Kwiatkowska}.} \bibinfo{year}{2018}\natexlab{}.
\newblock \showarticletitle{Reachability Analysis of Deep Neural Networks with Provable Guarantees}.
\newblock \bibinfo{journal}{\emph{arXiv preprint arXiv:1805.02242}} (\bibinfo{year}{2018}).
\newblock


\bibitem[Schneider(1993)]%
        {schneider1993convex}
\bibfield{author}{\bibinfo{person}{Rolf Schneider}.} \bibinfo{year}{1993}\natexlab{}.
\newblock \showarticletitle{Convex Bodies: The Brunn-Minkowski Theory}.
\newblock \bibinfo{journal}{\emph{Cambridge University Press}} (\bibinfo{year}{1993}).
\newblock


\bibitem[Singh et~al\mbox{.}(2018)]%
        {singh2018fast}
\bibfield{author}{\bibinfo{person}{Gagandeep Singh}, \bibinfo{person}{Timon Gehr}, \bibinfo{person}{Markus P{\"u}schel}, {and} \bibinfo{person}{Martin Vechev}.} \bibinfo{year}{2018}\natexlab{}.
\newblock \showarticletitle{Fast and Effective Robustness Certification}.
\newblock \bibinfo{journal}{\emph{Advances in Neural Information Processing Systems}} (\bibinfo{year}{2018}).
\newblock


\bibitem[Singh et~al\mbox{.}(2019)]%
        {Singh2019AnAbstraction}
\bibfield{author}{\bibinfo{person}{Gagandeep Singh}, \bibinfo{person}{Timon Gehr}, \bibinfo{person}{Markus P{\"u}schel}, {and} \bibinfo{person}{Martin Vechev}.} \bibinfo{year}{2019}\natexlab{}.
\newblock \showarticletitle{An Abstraction-Based Framework for Neural Network Verification}.
\newblock \bibinfo{journal}{\emph{arXiv preprint arXiv:1810.09031}} (\bibinfo{year}{2019}).
\newblock


\bibitem[Tjeng et~al\mbox{.}(2019)]%
        {tjeng2019evaluating}
\bibfield{author}{\bibinfo{person}{Vincent Tjeng}, \bibinfo{person}{Kai Xiao}, {and} \bibinfo{person}{Russ Tedrake}.} \bibinfo{year}{2019}\natexlab{}.
\newblock \showarticletitle{Evaluating Robustness of Neural Networks with Mixed Integer Programming}.
\newblock \bibinfo{journal}{\emph{arXiv preprint arXiv:1711.07356}} (\bibinfo{year}{2019}).
\newblock


\bibitem[Tomczak and Welling(2018)]%
        {tomczak2018vae}
\bibfield{author}{\bibinfo{person}{Jakub~M Tomczak} {and} \bibinfo{person}{Max Welling}.} \bibinfo{year}{2018}\natexlab{}.
\newblock \showarticletitle{VAE with a VampPrior}.
\newblock \bibinfo{journal}{\emph{Proceedings of the 21st International Conference on Artificial Intelligence and Statistics}} (\bibinfo{year}{2018}).
\newblock


\bibitem[Tran et~al\mbox{.}(2020)]%
        {Tran2020Verification}
\bibfield{author}{\bibinfo{person}{Hoang~Dung Tran}, \bibinfo{person}{Stanley Bak}, \bibinfo{person}{Weiming Xiang}, {and} \bibinfo{person}{Taylor~T. Johnson}.} \bibinfo{year}{2020}\natexlab{}.
\newblock \showarticletitle{Verification of Deep Convolutional Neural Networks Using ImageStars}. In \bibinfo{booktitle}{\emph{Computer Aided Verification: 32nd International Conference, CAV 2020}}. \bibinfo{publisher}{Springer}, \bibinfo{pages}{18--42}.
\newblock
\urldef\tempurl%
\url{https://doi.org/10.1007/978-3-030-53288-8_2}
\showDOI{\tempurl}


\bibitem[van~der Maaten and Hinton(2008)]%
        {vanDerMaaten2008}
\bibfield{author}{\bibinfo{person}{Laurens van~der Maaten} {and} \bibinfo{person}{Geoffrey Hinton}.} \bibinfo{year}{2008}\natexlab{}.
\newblock \showarticletitle{Visualizing Data using t-SNE}.
\newblock \bibinfo{journal}{\emph{Journal of Machine Learning Research}} (\bibinfo{year}{2008}).
\newblock


\bibitem[Vasilache et~al\mbox{.}(2022)]%
        {vasilache2022verifying}
\bibfield{author}{\bibinfo{person}{Andrei Vasilache}, \bibinfo{person}{Lucas Brodbeck}, {and} \bibinfo{person}{Tommaso Dreossi}.} \bibinfo{year}{2022}\natexlab{}.
\newblock \showarticletitle{Verifying Temporal Logic Specifications in Neural Network Controllers}.
\newblock \bibinfo{journal}{\emph{2022 ACM/IEEE 6th International Conference on Formal Methods in Software Engineering (FormaliSE)}} (\bibinfo{year}{2022}).
\newblock


\bibitem[Wang et~al\mbox{.}(2021)]%
        {wang2021beta}
\bibfield{author}{\bibinfo{person}{Shiqi Wang}, \bibinfo{person}{Huan Zhang}, \bibinfo{person}{Kaidi Xu}, \bibinfo{person}{Xue Lin}, \bibinfo{person}{Suman Jana}, \bibinfo{person}{Cho-Jui Hsieh}, {and} \bibinfo{person}{J~Zico Kolter}.} \bibinfo{year}{2021}\natexlab{}.
\newblock \showarticletitle{{Beta-CROWN}: Efficient bound propagation with per-neuron split constraints for complete and incomplete neural network verification}.
\newblock \bibinfo{journal}{\emph{Advances in Neural Information Processing Systems}}  \bibinfo{volume}{34} (\bibinfo{year}{2021}).
\newblock


\bibitem[Xu et~al\mbox{.}(2020)]%
        {xu2020automatic}
\bibfield{author}{\bibinfo{person}{Kaidi Xu}, \bibinfo{person}{Zhouxing Shi}, \bibinfo{person}{Huan Zhang}, \bibinfo{person}{Yihan Wang}, \bibinfo{person}{Kai-Wei Chang}, \bibinfo{person}{Minlie Huang}, \bibinfo{person}{Bhavya Kailkhura}, \bibinfo{person}{Xue Lin}, {and} \bibinfo{person}{Cho-Jui Hsieh}.} \bibinfo{year}{2020}\natexlab{}.
\newblock \showarticletitle{Automatic perturbation analysis for scalable certified robustness and beyond}.
\newblock \bibinfo{journal}{\emph{Advances in Neural Information Processing Systems}}  \bibinfo{volume}{33} (\bibinfo{year}{2020}).
\newblock


\bibitem[Xu et~al\mbox{.}(2021)]%
        {xu2021fast}
\bibfield{author}{\bibinfo{person}{Kaidi Xu}, \bibinfo{person}{Huan Zhang}, \bibinfo{person}{Shiqi Wang}, \bibinfo{person}{Yihan Wang}, \bibinfo{person}{Suman Jana}, \bibinfo{person}{Xue Lin}, {and} \bibinfo{person}{Cho-Jui Hsieh}.} \bibinfo{year}{2021}\natexlab{}.
\newblock \showarticletitle{{Fast and Complete}: Enabling Complete Neural Network Verification with Rapid and Massively Parallel Incomplete Verifiers}. In \bibinfo{booktitle}{\emph{International Conference on Learning Representations}}.
\newblock
\urldef\tempurl%
\url{https://openreview.net/forum?id=nVZtXBI6LNn}
\showURL{%
\tempurl}


\bibitem[Zhang et~al\mbox{.}(2018)]%
        {zhang2018efficient}
\bibfield{author}{\bibinfo{person}{Huan Zhang}, \bibinfo{person}{Tsui-Wei Weng}, \bibinfo{person}{Pin-Yu Chen}, \bibinfo{person}{Cho-Jui Hsieh}, {and} \bibinfo{person}{Luca Daniel}.} \bibinfo{year}{2018}\natexlab{}.
\newblock \showarticletitle{Efficient Neural Network Robustness Certification with General Activation Functions}.
\newblock \bibinfo{journal}{\emph{Advances in Neural Information Processing Systems}}  \bibinfo{volume}{31} (\bibinfo{year}{2018}), \bibinfo{pages}{4939--4948}.
\newblock
\urldef\tempurl%
\url{https://arxiv.org/pdf/1811.00866.pdf}
\showURL{%
\tempurl}


\end{thebibliography}


\end{document}